\definecolor{darkblue}{HTML}{000080}
\newtcolorbox{promptbox}[1][]{%
  colback=gray!5,
  colframe=gray!50!black,
  fontupper=\ttfamily,
  boxrule=0.5pt,
  arc=3pt,
  outer arc=3pt,
  boxsep=5pt,
  left=5pt,
  right=5pt,
  top=5pt,
  bottom=5pt,
  breakable,
  enhanced,
  listing only,
  listing options={basicstyle=\ttfamily, breaklines=true},
  title=#1
}
\algrenewcommand{\algorithmicrequire}{\textbf{Input:}}
\def\mP{{\mathcal{P}}}
\def\balign#1\ealign{\begin{align}#1\end{align}}
\def\baligns#1\ealigns{\begin{align*}#1\end{align*}}
\def\balignat#1\ealign{\begin{alignat}#1\end{alignat}}
\def\balignats#1\ealigns{\begin{alignat*}#1\end{alignat*}}
\def\bitemize#1\eitemize{\begin{itemize}#1\end{itemize}}
\def\benumerate#1\eenumerate{\begin{enumerate}#1\end{enumerate}}
\newenvironment{talign*}
 {\csname align*\endcsname}
 {\endalign}
\newenvironment{talign}
 {\csname align\endcsname}
 {\endalign}
\def\balignst#1\ealignst{\begin{talign*}#1\end{talign*}}
\def\balignt#1\ealignt{\begin{talign}#1\end{talign}}
\let\originalleft\left
\let\originalright\right
\renewcommand{\left}{\mathopen{}\mathclose\bgroup\originalleft}
\renewcommand{\right}{\aftergroup\egroup\originalright}
\def\tinycitep*#1{{\tiny\citep*{#1}}}
\def\tinycitealt*#1{{\tiny\citealt*{#1}}}
\def\tinycite*#1{{\tiny\cite*{#1}}}
\def\smallcitep*#1{{\scriptsize\citep*{#1}}}
\def\smallcitealt*#1{{\scriptsize\citealt*{#1}}}
\def\smallcite*#1{{\scriptsize\cite*{#1}}}
\def\mbb#1{\mathbb{#1}}
\def\reals{\mathbb{R}} %
\def\<{\left\langle} %
\def\>{\right\rangle}
\def\defeq{\triangleq} %
\def\norm#1{\left\|{#1}\right\|} %
\def\indic#1{\mbb{I}\left[{#1}\right]} %
\def\E{\mbb{E}} %
\def\bigO#1{\mathcal{O}\left(#1\right)} %
\newcommand{\todist}{\stackrel{d}{\to}}
\newcommand{\toprob}{\stackrel{p}{\to}}
\providecommand{\argmax}{\mathop\mathrm{arg max}} %
\providecommand{\argmin}{\mathop\mathrm{arg min}}
\newtheorem{theorem}{Theorem}
\newenvironment{proof-sketch}{\noindent\textbf{Proof Sketch}
  \hspace*{1em}}{\qed\bigskip\\}
\newenvironment{proof-idea}{\noindent\textbf{Proof Idea}
  \hspace*{1em}}{\qed\bigskip\\}
\newenvironment{proof-of-lemma}[1][{}]{\noindent\textbf{Proof of Lemma {#1}}
  \hspace*{1em}}{\qed\\}
\newenvironment{proof-of-theorem}[1][{}]{\noindent\textbf{Proof of Theorem {#1}}
  \hspace*{1em}}{\qed\\}
\newenvironment{proof-attempt}{\noindent\textbf{Proof Attempt}
  \hspace*{1em}}{\qed\bigskip\\}
\renewcommand{\Pr}[1]{\mathbb{P}\left( #1 \right)}
\newcommand{\abs}[1]{\left|#1\right|}
\newcommand{\handout}[5]{
  \noindent
  \begin{center}
    \framebox{
      \vbox{
        \hbox to 5.78in { {\bf \title } \hfill #2 }
        \vspace{4mm}
        \hbox to 5.78in { {\Large \hfill #5  \hfill} }
        \vspace{2mm}
        \hbox to 5.78in { {\em #3 \hfill #4} }
      }
    }
  \end{center}
  \vspace*{4mm}
}
\title{Majority of the Bests: Improving Best-of-N via Bootstrapping}
\author{%
  Amin Rakhsha$^{1,2,3}$\thanks{Work done during internship at Autodesk. Correspondence to: \texttt{aminr@cs.toronto.edu}.}
  \And
  Kanika Madan$^{3}$
  \And
  Tianyu Zhang$^{3}$
  \AND
  Amir-massoud Farahmand$^{4,5,1}$
  \And
  Amir Khasahmadi$^{3}$
  \AND
  \normalfont
    $^{1}$University of Toronto \quad 
    $^{2}$Vector Institute \quad 
    $^{3}$Autodesk \\ \\
    $^{4}$Polytechnique Montréal \quad
    $^{5}$Mila - Quebec AI Institute 
}
\newcommand{\Dref}{p_\text{ref}}
\newcommand{\phatref}{\hat{p}_\text{ref}}
\newcommand{\piref}{\pi_\text{ref}}
\newcommand{\Nc}{{N_c}}
\newcommand{\Nw}{{N_w}}
\newcommand{\Yhat}{{\hat{Y}}}
\newcommand{\Zhat}{{\hat{Z}}}
\newcommand{\pihat}{{\hat{\pi}}}
\newcommand{\QwenG}{{Qwen2.5-3B} } 
\newcommand{\LlamaG}{{Llama3.1-8B} } 
\newcommand{\ArmoRM}{{ArmoRM} }
\newcommand{\GRM}{{GRM} }
\newcommand{\poolsizeMmlu}{$1400$ }
\begin{document}

\maketitle

\begin{abstract}

Sampling multiple outputs from a Large Language Model (LLM) and selecting the most frequent (Self-consistency) or highest-scoring (Best-of-N) candidate is a popular approach to achieve higher accuracy in tasks with discrete final answers. Best-of-N (BoN) selects the output with the highest reward, and with perfect rewards, it often achieves near-perfect accuracy. With imperfect rewards from reward models, however, BoN fails to reliably find the correct answer and its performance degrades drastically. We consider the distribution of BoN’s outputs and highlight that, although the correct answer does not usually have a probability close to one under imperfect rewards, it is often the most likely outcome. This suggests that the mode of this distribution can be more reliably correct than a sample from it. Based on this idea, we propose Majority-of-the-Bests (MoB), a novel selection mechanism that estimates the output distribution of BoN via bootstrapping and selects its mode. Experimental results across five benchmarks, three different base LLMs, and two reward models demonstrate consistent improvements over BoN in 25 out of 30 setups. We also provide theoretical results for the consistency of the bootstrapping. MoB serves as a simple, yet strong alternative to BoN and self-consistency, and more broadly, motivates further research in more nuanced selection mechanisms.\footnote{Code and data available at \href{https://github.com/arakhsha/mob}{https://github.com/arakhsha/mob}}

\end{abstract}

\section{Introduction}

Scaling the inference-time computation of language models has led to a significant improvement of their performance on a variety of tasks \citep{brown2024large,snell2025scaling,wu2024empirical,openai2024o1,deepseek2025r1}. A growing number of methods have been introduced in this paradigm, such as generating long chains-of-thought \citep{wei2022chain, muennighoff2025s1}, asking the model to evaluate and improve its own outputs \citep{madaan2023self}, and tree search \citep{yao2023tree,hao2023reasoning,zhang2024rest}. Another family of such algorithms, termed \textit{sample-and-marginalize} by \citet{wang2022self}, generate multiple outputs from the model and then aggregate them into a final answer. Examples include Self-consistency \citep{wang2022self}, Best-of-N \citep{lightman2023let}, and Weighted Best-of-N \citep{li2022making}. These methods have gained popularity due to their simplicity and scalability.

Self-consistency (SC) \citep{wang2022self}, also referred to as ``majority voting'', is a widely used algorithm in this paradigm. It samples multiple outputs from the model and selects the final answer that appears most frequently among them. SC improves the performance by leveraging a key property of the model's output distribution: on difficult problems, the probability of generating the correct answer is often far from 1, making single-sample predictions unreliable. SC capitalizes on the fact that, even if the model's output distribution is imperfect, it may still favor the correct answer and generate it more frequently than incorrect ones.

Best-of-N (BoN) \citep{lightman2023let} uses a reward model to evaluate the generated outputs and chooses the final answer in the highest-scoring output. With an ideal reward model, BoN succeeds as long as one of the generated outputs is correct. 
In this paper, we highlight that in the realistic setting of an imperfect reward model, the success of BoN is no longer (nearly) guaranteed. In such cases, BoN exhibits stochastic behavior akin to the underlying generative model. While the reward model improves the likelihood of selecting the correct answer, it often falls short of ensuring certainty. This is the same property that underlies the effectiveness of SC. Motivated by this observation, we show that applying a similar principle—aggregating multiple samples to identify the most probable answer—leads to a better performance over BoN.

We introduce Majority-of-the-Bests (MoB), a method that leverages bootstrapping to improve upon BoN \textit{by approximating the most probable output of BoN.} As illustrated in Figure~\ref{fig:mob_bootstrapping}, after obtaining multiple (parallel) solution samples for a given question and computing their rewards, we apply bootstrapping: we create subsets of size $m$ by sampling with replacement from the generated outputs. For each subset, we select the sample with the highest reward. This results in a new set of high-reward samples, over which we perform majority voting to determine the final answer. Just like BoN and SC, MoB can be applied independent of the output generation procedure. It only modifies the selection of the final answer with marginally extra computation on the CPU. We provide a procedure to adaptively select $m$, eliminating any critical hyperparameters from the algorithm. We show the consistency of the algorithm theoretically, and empirically show significant improvements over BoN on 25 out of 30 tested setups.

\begin{figure}[t]
  \centering
  \includegraphics[width=1\textwidth]{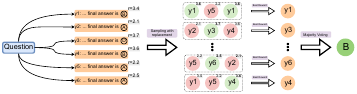}
  \caption{Majority-of-the-Bests: first, $N$ outputs are generated for the given question. Then, we create a large number of subsets of size $m < N$ by sampling with replacement from the generated outputs. From each subset, we choose the output with the highest reward. The most frequent answer among these chosen outputs is reported as the final answer.}
  \label{fig:mob_bootstrapping}
\end{figure}

\section{Background}

In this section, we formulate BoN and bootstrapping and provide some background for the algorithm and its theoretical grounds.
Given a prompt $x$, in the standard procedure with LLMs, we sample an output $Y \sim \Dref$ from a base model $\Dref$. This output yields a corresponding final answer $Z = f(Y)$ after applying a post-processing or evaluation function $f$. For example, for a multiple choice problem, $Z$ is the chosen option and $Y$ is the whole output containing both $Z$ and its justification. We denote the distribution of the final answer in this procedure as $\piref$, that is, $Z \sim \piref$. The goal is to find the correct final answer $z^*$. We define the success probability for this given problem as the probability of selecting the correct final answer. If the algorithm's final answer is $Z$, the success probability is defined as $\Pr{Z = z^*}$. Given a dataset of questions, the average of the success probabilities over all questions is referred to as the accuracy. For the standard procedure, the success probability is equal to $\piref(z^*)$ and the corresponding accuracy is called the \textit{pass@1} accuracy.
We assume access to a reward model $r$ that assigns a reward $R = r(Y)$ to the output $Y$, reflecting its accuracy, coherence, or alignment with human preferences \citep{uesato2022solving, lightman2023let}.
For a given budget $N$, sample-and-marginalize algorithms generate $N$ independent outputs $Y_1, \ldots, Y_N \sim \Dref$ and select the final answer reached by one of these outputs. 
BoN selects the final answer from the output with the highest reward, that is,

\begin{equation*}
  Z^{\text{Best}}_N = f\big(\argmax_{y \in \{Y_1, \ldots, Y_N\}} r(y)\big). 
\end{equation*}

Regularized versions of BoN have also been introduced to address its reward hacking issues in the presence of inaccurate rewards \citep{jinnai2024regularized,ichihara2025evaluation}. Alternatively, self-consistency or majority voting selects the final answer that occurs most frequently among $Z_1, \ldots, Z_N$, where $Z_i = f(Y_i)$ is the final answer for output $Y_i$. If $N$ is large enough, this most frequent answer will be the mode of the final answer distribution $\piref$. 
\citet{li2022making} suggested the Weighted Best-of-N (WBoN) selection method. For each final answer, WBoN sums the rewards of all outputs that lead to it. Then, it selects the final answer with the highest total reward. 

\paragraph{Bootstrapping.} Bootstrapping is a powerful and widely used non-parametric resampling technique for estimating the distribution of a statistic by repeatedly drawing samples with replacement from the original dataset \citep{efron1992bootstrap, efron1994introduction}. The core idea is to generate multiple ``bootstrap samples'', by sampling observations uniformly and with replacement. For each bootstrap sample, the statistic of interest is computed. The collection of these computed statistics from the many bootstrap samples forms an empirical approximation of the statistic's true distribution. We use this technique to approximate the distribution of BoN's output.

\section{Motivation: Output Distribution of Best-of-N}
\label{sec:motivation}
\begin{figure}[t]
    \includegraphics[width=1\textwidth]{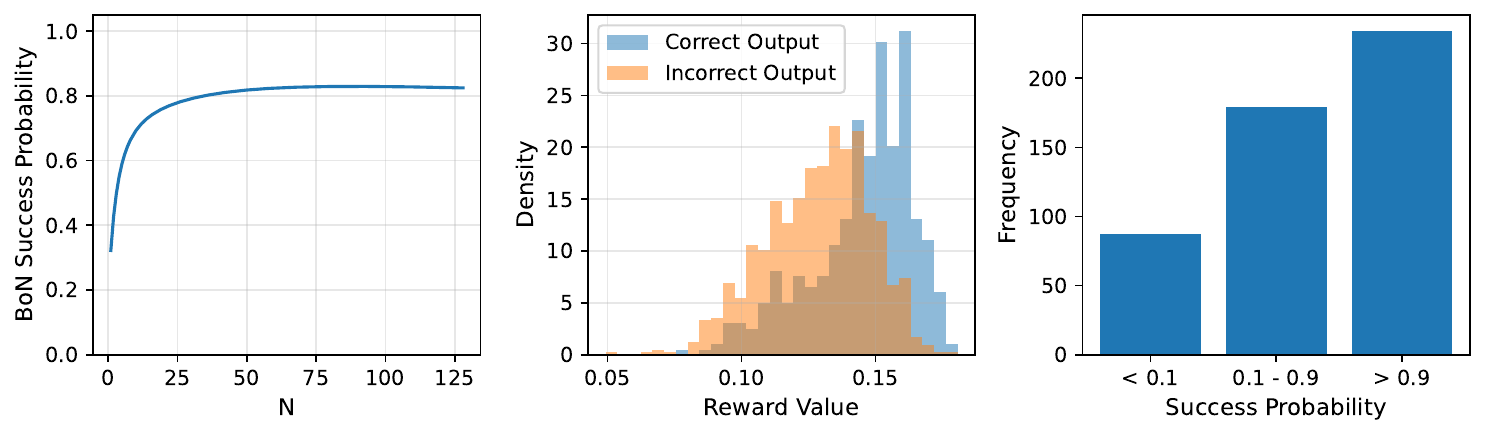}
    \centering
    \caption{ \textit{(Left)} BoN's success probability as a function of $N$ for question 647 from MMLU-Pro-Math. The success probability remains below 80\%. \textit{(Middle)} Distribution of the reward for correct and incorrect outputs for the same question. A separation between the two distributions is ideal. \textit{(Right)} Histogram of Best-of-64 success probabilities over 500 questions.}
    \label{fig:bon_dist}
\end{figure}

To motivate our algorithm, we highlight the behavior of BoN's final answer distribution. We denote this distribution by $\pi_N$. It means,
\begin{align*}
    Z^\text{Best}_N \sim \pi_N.
\end{align*}
Assume among the $N$ sampled outputs, $\Nc$ outputs $\{Y^c_1, \ldots, Y^c_{\Nc}\} \subseteq \{Y_i\}_{i=1}^N$ yield the correct final answer: $f(Y^c_i) = z^*$. Conversely, $\Nw = N - \Nc$ outputs $\{Y^w_1, \ldots, Y^w_{\Nw}\} \subseteq \{Y_i\}_{i=1}^N$ lead to an incorrect solution. Then, BoN's output is correct if the highest reward among the correct outputs is larger than the highest reward among the incorrect ones. Formally, we can express this condition as:
\begin{equation}
\label{eq:bon_condiiton}
    \max\left(r(Y^c_1), \ldots, r(Y^c_{\Nc})\right) > \max\left(r(Y^w_1), \ldots, r(Y^w_{\Nw})\right).
\end{equation}
There are two factors that influence the probability of this event. First, note that each side of \eqref{eq:bon_condiiton} is the maximum of some random variables. As the number of random variables increases, the probability distribution of their maximum shifts towards higher values. Therefore, larger values of $\Nc$ and smaller values of $\Nw$, make condition \eqref{eq:bon_condiiton} more likely. The values of $\Nc$ and $\Nw$ depend on $\piref(z^*)$, the probability of the correct answer $z^*$ in the base model's final answer distribution $\piref$. For large enough $n$, we will have 
\begin{align*}
    \Nc \approx N \cdot \piref(z^*) \quad , \quad \Nw \approx N \cdot (1 - \piref(z^*)).
\end{align*}
It means that if the base model has a higher chance of solving the problem, BoN is also more likely to select the correct answer.

The second factor is the distribution of $r(Y^c_i)$ and $r(Y^w_i)$ on each side of \eqref{eq:bon_condiiton}. The reward of a correct output follows the conditional distribution $\mP_c \triangleq \Pr{r(Y) | f(Y)=z^*}$ while the reward of an incorrect output follows the conditional distribution $\mP_w \triangleq \Pr{r(Y) | f(Y) \ne z^*}$. We hope that the reward model assigns higher rewards to correct outputs, and $r(Y_i^c) \sim \mP_c$ on the left side of \eqref{eq:bon_condiiton} generally be larger than $r(Y_i^w) \sim \mP_w$ on the right side.

Therefore, the success probability of BoN heavily depends on the separation between $\mP_c$ and $\mP_w$. A perfect reward model would always assign a higher value to a correct output than to an incorrect one. In that case, as long as at least one correct output is generated (which is highly likely for large enough $N$), condition \eqref{eq:bon_condiiton} is satisfied. The resulting success probability is close to 1, indicating a nearly deterministic final answer. On the other hand, consider the case where $\mP_c$ and $\mP_w$ are identical. In this case, the reward of an output becomes independent of its correctness, and choosing according to the reward model will be no better than a random choice. Consequently, the success probability of BoN will be the same as the base model, i.e. $\pi_N(z^*) = \piref(z^*)$. 
In practice, our reward models exhibit a middle ground between these two extremes. They might not be perfect for BoN to succeed with a single correct output, but they can still be somewhat informative to increase the success probability of BoN compared to the base model.

In Figure~\ref{fig:bon_dist}, we show an example of these dynamics for Question 647 of the MMLU-Pro-Math benchmark \citep{wang2024mmlu} with base model \QwenG \citep{qwen2.5} and reward model \ArmoRM \citep{ArmoRM}. We approximate the output distribution $\Dref$ with a large pool of \poolsizeMmlu samples. 
In Question 647 (Figure~\ref{fig:bon_dist}), the two distributions $\mP_c$ and $\mP_w$ are overlapping, and even with large values of $N$, the success probability remains below $80\%$. Nonetheless, BoN still outperforms the base model, which is equivalent to Best-of-1 and has a success probability of $30\%$ in this case.

We expect the stochasticity of BoN's output to depend on the difficulty of the question relative to the base and reward models' capabilities. For more difficult questions, the base model generates fewer correct outputs, and the reward model is less likely to distinguish the correct outputs from the incorrect ones. Through the two factors discussed above, BoN is not able to pick the correct answer with high certainty. The right plot in Figure~\ref{fig:bon_dist} shows the histogram of the success probability of Best-of-64 among 500 randomly selected MMLU-Pro-Math problems. We see that for approximately 175 problems, BoN has a success probability between 0.1 and 0.9. That means, BoN has a significant chance of returning the correct answer but fails to do so reliably. \textit{The idea behind our introduced method, MoB, is that if we can find the most probable output of the BoN distribution, we may reliably pick the correct answer even if its probability is well below $1$. }

\section{Majority-of-the-Bests}
\label{sec:method}

In Section~\ref{sec:motivation}, we showed that BoN's final answer is stochastic, and this stochasticity might remain true even with a very large budget $N$. In this section, we introduce Majority-of-the-Bests (MoB). MoB can select the correct answer with high probability as long as the correct answer is the most probable output of BoN, even if its probability is well below $1$. We first showcase this idea in the hypothetical case where BoN's output distribution $\pi_N$ is given by an oracle. Later, we show how to estimate this distribution using bootstrapping.

\subsection{MoB with Oracle Access to BoN's Output Distribution}
\label{sec:mob-oracle}

Suppose the distribution of BoN's final answer $\pi_N$ is known through an oracle. Instead of sampling from this distribution, which is equivalent to BoN and is a noisy decision, we propose selecting the mode of this distribution. That is 
\begin{align}
\label{eq:oracle-mob}
    z^{\text{OracleMoB}}_N = \argmax_z \pi_N(z).
\end{align}
We refer to this algorithm as \textit{Oracle MoB} as it relies on an oracle. By selecting the mode, if the correct answer has a higher probability than any of the other answers, it will be selected without any randomness that would reduce the success probability. 
 Since $\piref = \pi_1$, we can say SC for a large $N$ is equivalent to Oracle MoB with $N=1$. It has been extensively shown that SC improves the LLM's original accuracy. As we will also empirically show, MoB similarly increases the accuracy of BoN by selecting the mode of its output distribution. 

In Figure~\ref{fig:oracle_mob}, we compare the accuracy of Oracle MoB with BoN on MATH500 \citep{lightman2023let,hendrycks2021measuring} and math problems of MMLU-Pro \citep{wang2024mmlu}. We use the same output pool, base model, and reward model as Figure~\ref{fig:bon_dist}. We can see that depending on the value of $N$, Oracle MoB provides 5 to 10 percentage points improvement in accuracy. Oracle MoB unrealistically requires an oracle access to $\pi_N$. Next, we will show how $\pi_N$ can be estimated via bootstrapping and remove the oracle dependence.

\begin{figure}[t]
  \begin{subfigure}[b]{0.45\textwidth}
        \centering
        \includegraphics[width=0.9\linewidth]{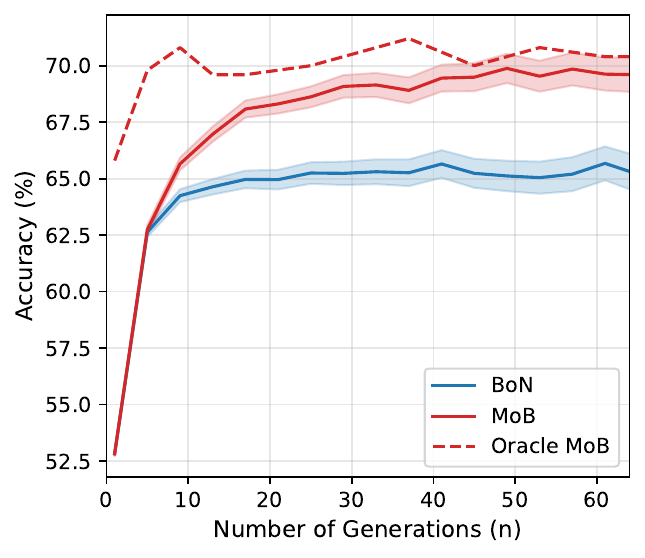}
    \end{subfigure}
    \hfill
    \begin{subfigure}[b]{0.45\textwidth}
        \centering
        \includegraphics[width=0.9\linewidth]{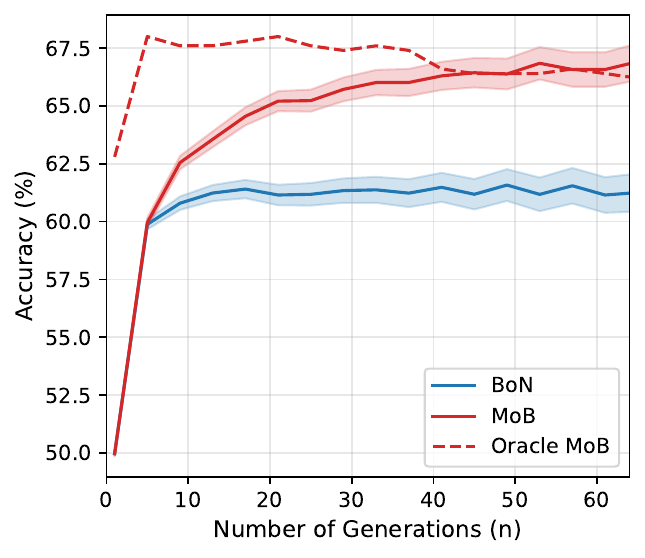}
    \end{subfigure}
  \caption{Final answer accuracy comparison of BoN, MoB, and Oracle MoB on MMLU-Pro-Math using \QwenG \textit{(Left)} and \LlamaG \textit{(Right)} as the base model, and \ArmoRM as the reward model. Results are averaged across all problems and multiple runs. Shaded area indicates the standard error.}
  \label{fig:oracle_mob}
\end{figure}

\subsection{MoB with Estimated BoN's Output Distribution}

\label{sec:mob-bootstrapping}
We now discuss how, without the oracle access to the BoN's output distribution $\pi_N$, one can approximately find its most probable output. The most obvious approach is to follow the same procedure as SC. For some $k \ge 1$, we can run $k$ independent BoN procedures, each with $m$ outputs. Then, out of the $k$ resulting answers, we select the final answer that appears the most number of times. The answer of the BoN procedures let us approximate $\pi_m$, and selecting the most frequent answer among them will approximate Oracle MoB \eqref{eq:oracle-mob} with budget $m$. We refer to this algorithm as ``BoN+SC'' due to its simple combination of BoN and SC. To keep the generation budget fixed at $N$, we are forced to use a smaller budget $m$ for each of the BoN runs. For now, we treat the choice of $m$ as a hyperparameter, but will return to this choice later. Assume $m < N$ and $k = \lfloor N/m \rfloor$. Formally,
\begin{align}
    Z^{\text{Best}, (i)}_m &= f\bigg(\argmax_{y \in \{Y_{im}, \ldots, Y_{(i+1)m - 1}\}} r(y)\bigg) && (i = 1, \ldots, k),\\
    Z^{\text{BoN + SC}}_{m,n} &= \argmax_z \sum_{i}\indic{Z^{\text{Best}, (i)}_m = z}.
\end{align}

The main problem with BoN+SC is that it is too expensive. We would like to have a large value for $m$ to get the benefits offered by BoN. To have a fairly accurate estimation of $\pi_m$, we need a reasonably large value for $k$. Together, this requires a large budget $N \approx mk$.

The deficiency of BoN+SC comes from the fact that each sample $Y_i$ only contributes to generating one BoN output. To address this deficiency, we propose estimating $\pi_m$ not by generating independent samples from it, but by bootstrapping. To do that, we first note that the distribution $\pi_m$ of $Z^{\text{Best}}_m$ is a function of the unknown distribution $\Dref$. Bootstrapping suggests to estimate $\pi_m$ with the BoN's output distribution under a known approximation $\phatref \approx \Dref$. The typical non-parametric approach is to set $\phatref$ to be the empirical distribution of the generated samples $\{Y_1, \ldots, Y_N\}$. Since $\phatref$ is known, we can cheaply sample from it. For any arbitrarily large value $B$, we generate $B$ approximately sampled BoN outputs. We first create $B$ datasets of size $m$ from $\phatref$. That is 
\begin{align*}
    D_i = \{\Yhat_{i,1}, \Yhat_{i,2}, \ldots, \Yhat_{i,m} \} \sim \phatref, && (i = 1, \ldots, B).
\end{align*}
This is equivalent to sampling $m$ outputs from the original pool $\{Y_1, \ldots, Y_n\}$ with replacement. Then, similar to BoN+SC, we can run BoN on each dataset, and then pick the most common outcome. Formally,
\begin{align}
    \Zhat^{\text{Best}, (i)}_m &= f\bigg(\argmax_{y \in D_i} r(y)\bigg) && (i = 1, \ldots, B),\\
    Z^{\text{MoB}}_{m,N} &= \argmax_z \sum_{i=1}^B\indic{\Zhat^{\text{Best}, (i)}_m = z}.
\end{align}
This procedure is our MoB algorithm for a given $m$. We define $\pihat_{m,N}$ to be the (random) distribution of $\Zhat^{\text{Best}, (1)}_m$ given $\{Y_i\}$ at hand. With sufficiently large $B$ (usually $B=10,000$ is sufficient), the empirical distribution of $\{\Zhat^{\text{Best}, (i)}_m\}$ will accurately estimate $\pihat_{m,N}$. With this approximation, we can write
\begin{align}
\label{eq:mob_dist}
    Z^{\text{MoB}}_{m,N} \approx \argmax_z \pihat_{m,N}(z)
\end{align}

Note that this is a light computation that can be carried out on the CPU. Therefore, we can freely choose a large $B$. In the supplementary material, we provide an even more efficient way of estimating $\pihat_{m,N}$ with $\bigO{N\log N}$ complexity that finds $Z^{\text{MoB}}_{m,N}$ directly and without creating $B$ datasets.  It is worth mentioning that our use of bootstrap samples resembles bagging \citep{Breiman1996BaggingP} and subagging \citep{Scornet2015consistency}, where a family of models is trained on the subsampled datasets and then aggregated.

In Figure~\ref{fig:mob_vs_bonsc}, we compare MoB with BoN+SC in the same setup as Figure~\ref{fig:oracle_mob}. In the left plot, we fix $m=8$ and compare the algorithms' error on estimating $\pi_m$ for a range of values for $N$. We measure the distance between the two distributions according to the $\ell_1$-norm. As we can see, bootstrapping is consistently the superior approach for this approximation task and offers a more accurate estimation of $\pi_m$. In the right plot, we set $m = \lfloor \sqrt{N} \rfloor$ and compare the final accuracy of the algorithms. The choice of $m = \lfloor \sqrt{N} \rfloor$ ensures that $k \approx \sqrt{N}$ and will also increase as $N$ increases. We observe that the superior accuracy of bootstrapping in the estimation of $\pi_m$ translates to a better final accuracy of the algorithm, especially when the budget $N$ is more limited.

\begin{figure}[t]
    \begin{subfigure}[b]{0.45\textwidth}
        \centering
        \includegraphics[width=0.9\linewidth,height=5cm]{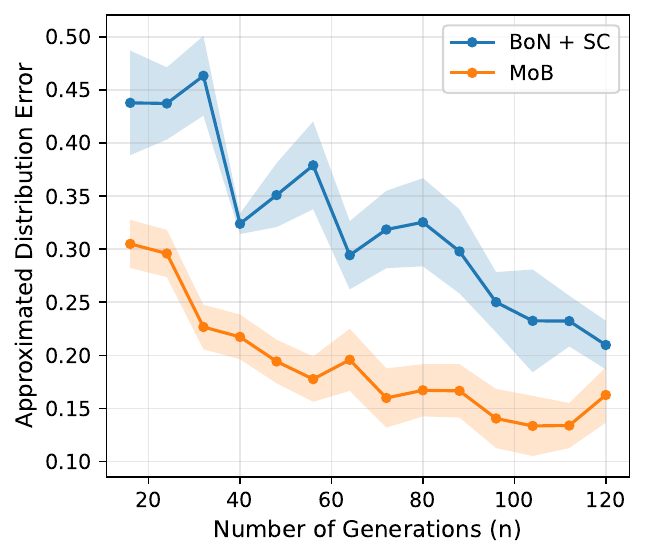}
    \end{subfigure}
    \hfill
    \begin{subfigure}[b]{0.45\textwidth}
        \centering
        \includegraphics[width=0.9\linewidth,height=5cm]{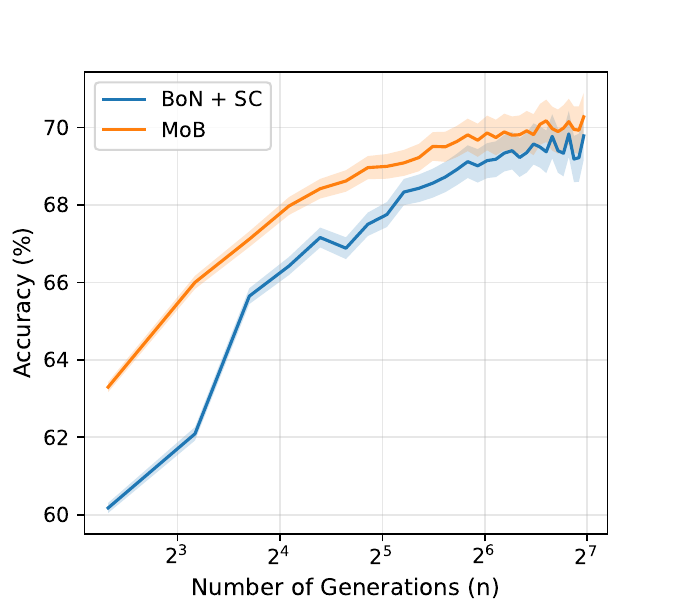}
    \end{subfigure}
    \centering
    \caption{Comparison of MoB and BoN+SC using Qwen2.5-3B as the reference model and ArmoRM as the reward model. \textit{(Left)} $\pi_m$ approximation error in $\ell_1$-norm for $m=8$. \textit{(Right)} Average accuracy on MMLU-Pro-Math dataset. Shaded area indicates the standard error.}
    \label{fig:mob_vs_bonsc}
\end{figure}

One might wonder if it is possible to choose $m$ to be much larger than what was possible in BoN+SC, potentially even $m = N$. There is no obvious limitation on the size of resampled datasets $D_i$, and nonetheless, most commonly in bootstrapping, the size of resampled datasets is equal to the original dataset. However, estimating the distribution of values related to the extremes of random samples is a classic example of failure for the conventional bootstrapping, see for example \citet{athreya1994bootstrapping} and \citet[Section 7.4]{efron1994introduction}. Since BoN selects the output with the highest reward, it is affected by the same failure. To see this, note that the output with the highest reward appears in each dataset with the probability of $1 - (\frac{N-1}{N})^m$, and it will be chosen in any dataset in which it appears. Therefore, if $m=N$,
\begin{align*}
    \Pr{\Zhat^{\text{Best}, (i)}_m = Z^{\text{Best}}_N} \ge 1 - \big(\frac{N-1}{N}\big)^N \approx 1 - e^{-1} \approx 0.632.
\end{align*}
This means that $\pihat_{N,N}$ will always incorrectly assign a probability of at least $0.632$ to the conventional BoN's answer.

Fortunately, using smaller resampled datasets, as we do in MoB, is one of the remedies for such failures of bootstrapping and is well-studied in the literature\citep{athreya1994bootstrapping,bickel2011resampling}. This approach is referred to as $m$-out-of-$n$ bootstrapping. We show that under the usual conditions of $m$-out-of-$n$ bootstrapping and mild assumptions on the tail of reward distributions, our use of bootstrapping to estimate $\pi_m$ is a valid one. Similar to the typical guarantees for bootstrap estimations, we show that our bootstrap estimation is indeed consistent.

\begin{figure}[t]
  \centering
  \begin{subfigure}[b]{0.45\textwidth}
        \centering
        \includegraphics[width=0.9\linewidth,height=5cm]{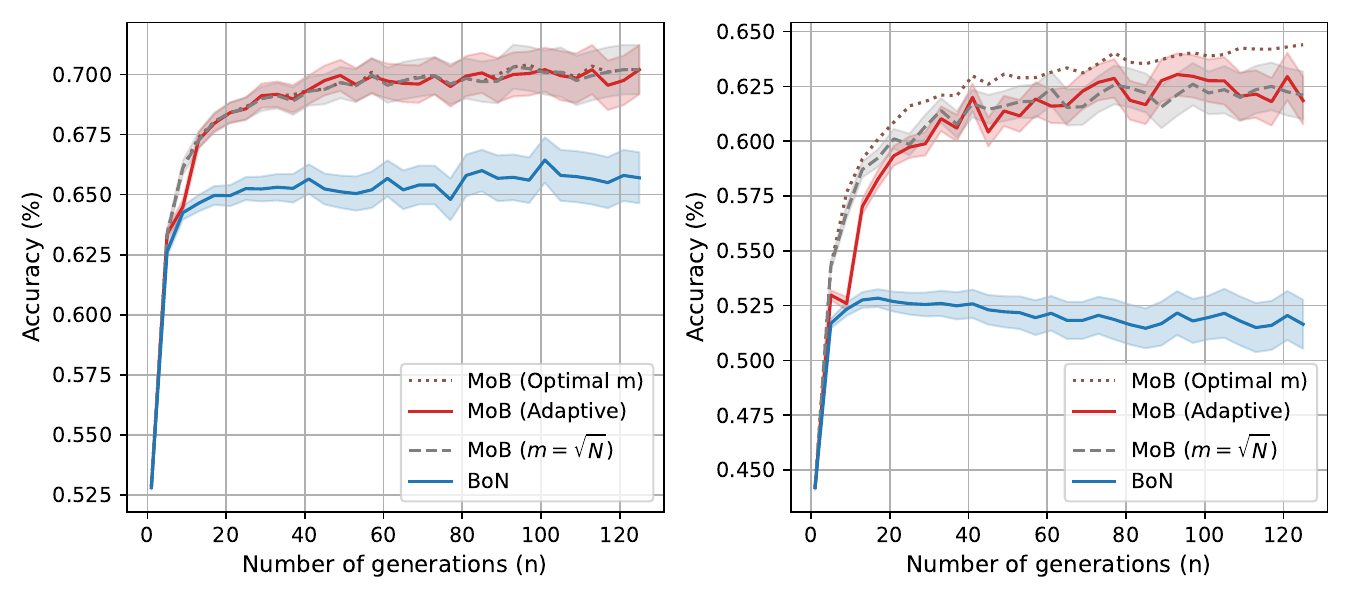}
    \end{subfigure}
    \hfill
    \begin{subfigure}[b]{0.45\textwidth}
        \centering
        \includegraphics[width=0.9\linewidth,height=5cm]{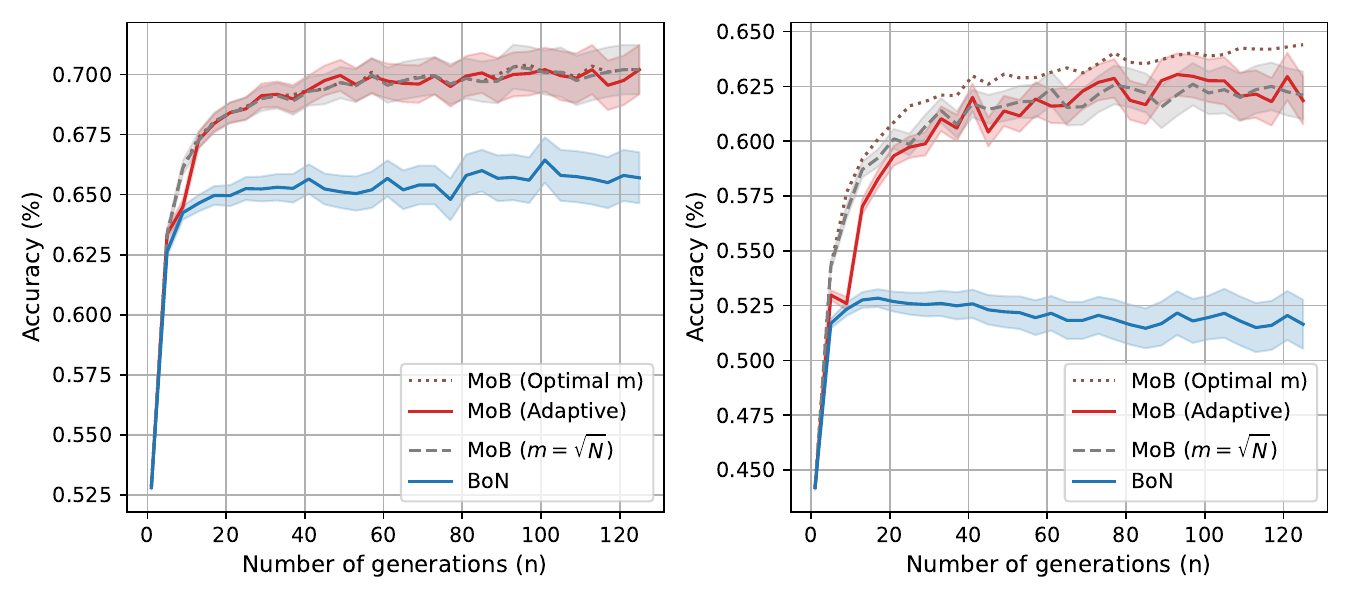}
    \end{subfigure}
  \caption{Comparing $m$ selection methods using \ArmoRM reward model with MMLU-Pro-Math and \QwenG \textit{(Left)} and MATH500 and \LlamaG \textit{(Right)}. Shaded area indicates the standard error.} 
  \label{fig:mob-vs-optimal}
  \vspace{-0.5cm}
\end{figure}

\begin{theorem}
    \label{theorem:consistency}
    Under mild assumptions on the distribution of rewards, if there are finite possible values for $Z$ and as $N \to \infty$, we have $m \to \infty$ and $m/N \to 0$, then the estimated $\pihat_{m,N}$ will converge to the true distribution $\pi_m$. That is, for any $\epsilon > 0$, 
    \begin{align*}
         \lim_{N \to \infty}\Pr{\norm{\pihat_{m,N} - \pi_m}_1 \ge \epsilon} = 0.
    \end{align*}
\end{theorem}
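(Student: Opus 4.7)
The plan is to reduce $\ell_1$-convergence to pointwise convergence using the finite support of $Z$, and then apply the machinery of $m$-out-of-$n$ bootstrap consistency to a carefully chosen functional of the reward distribution. Since $Z$ takes only finitely many values $z_1, \ldots, z_K$, it suffices to show that $|\pihat_{m,N}(z_k) - \pi_m(z_k)| \to 0$ in probability for each $k$. Let $\mu$ denote the joint distribution on $\reals \times \{z_1,\ldots,z_K\}$ of the pair $(r(Y), f(Y))$ under $\Dref$, and let $\hat{\mu}_N$ be its empirical counterpart based on $Y_1, \ldots, Y_N$. The key observation is that both $\pi_m(z_k)$ and $\pihat_{m,N}(z_k)$ are the \emph{same} functional $T_m^{(k)}$ applied to $\mu$ and $\hat{\mu}_N$ respectively, where $T_m^{(k)}(\nu)$ is the probability that, among $m$ i.i.d.\ draws from $\nu$, the draw with the largest reward coordinate has class label $z_k$. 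This reformulation separates the task into (i) convergence of $\hat{\mu}_N$ to $\mu$ and (ii) enough stability of the functional $T_m$.

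First I would obtain a workable expression for $T_m^{(k)}(\nu)$ by conditioning on the multinomial partition of the $m$ draws across classes. Writing $p_l(\nu) = \nu(\reals \times \{z_l\})$ and $F_l^{\nu}$ for the conditional CDF of the reward given class $z_l$, a standard order-statistic computation expresses $T_m^{(k)}(\nu)$ as a multinomial sum whose terms involve products of powers of the $F_l^{\nu}$'s. By multinomial concentration, this sum is dominated by configurations where the class counts $n_l$ are close to $m \cdot p_l(\nu)$. Applied to $\nu = \hat{\mu}_N$, the class proportions $\hat{p}_l$ concentrate around $p_l$ by the law of large numbers, and the empirical conditional CDFs $\hat{F}_l$ converge to $F_l$ uniformly by Glivenko--Cantelli.

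Next I would invoke the classical $m$-out-of-$n$ bootstrap consistency theory (as in \citealt{athreya1994bootstrapping} and \citealt{bickel2011resampling}) to handle the extreme-value functional sitting inside $T_m^{(k)}$. Under the mild tail regularity condition that each $F_l$ has no atom at its upper endpoint, the maximum of $n_l$ bootstrap draws from $\hat{F}_l$ has the same limiting distribution as the maximum of $n_l$ fresh draws from $F_l$ whenever $n_l \to \infty$ with $n_l/N \to 0$---which is exactly what the hypothesis $m/N \to 0$ delivers for the dominant terms. Under this regularity, the event ``class $k$ attains the overall maximum'' is a continuity set of the joint limit of (maximum reward, class of maximizer), so the bootstrap assigns it asymptotically the right probability. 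Assembling these ingredients across the dominant multinomial configurations yields $T_m^{(k)}(\hat{\mu}_N) - T_m^{(k)}(\mu) \to 0$ in probability.

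I expect the main obstacle to be controlling the $m \to \infty$ regime, since the bootstrap systematically overweights the empirical top---this is precisely the $1 - e^{-1}$ inflation the paper has already flagged for the $n$-out-of-$n$ case. The hypothesis $m/N \to 0$ is essential because it guarantees that the bootstrap maximum of $m$ draws corresponds to an \emph{interior} order statistic of the empirical sample, effectively the empirical $(1 - m/N)$-quantile of $F_l$, which is a consistent estimator of the true quantile. Making this quantitative uniformly in $m$, and controlling the contribution of unlikely multinomial configurations in which some class is severely underrepresented, is the technical crux; I would handle both via a truncation argument that restricts attention to a typical set of multinomial counts and then couples bootstrap draws to fresh i.i.d.\ draws using the quantile transform within each class.
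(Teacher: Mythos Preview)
Your proposal takes a genuinely different and more hands-on route than the paper. The paper does not decompose by multinomial class counts, invoke Glivenko--Cantelli on conditional CDFs, or attempt any explicit coupling; instead it treats the indicator $T_m = \mathbb{I}[Z^{\text{Best}}_m = z]$ (plus a small independent Bernoulli perturbation to avoid a degenerate limit law) as a single bootstrap statistic of the pairs $X_i = (Z_i, R_i)$ and applies a ready-made $m$-out-of-$n$ consistency result of \citet{bickel2011resampling} as a black box. Almost all of the effort in the paper's proof goes into verifying that theorem's hypotheses, the decisive one being that the law of $T_m$ converges weakly as $m \to \infty$---equivalently, that $\pi_m(z)$ has a limit $\pi_\infty(z)$. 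That fact is established separately as a standalone result (the paper's Theorem~\ref{theorem:bon_dist_asymptotic}) via an extreme-value computation under a tail-equivalence condition $\lim_{x\uparrow x^*}(1-F_0(x))/(1-F_1(x)) = c$ on the class-conditional reward CDFs; this condition is the ``mild assumption'' alluded to in the statement.

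Your decomposition-and-coupling strategy could in principle be carried through, and arguably exposes more of the mechanism, but it is substantially more labor: you would have to execute the multinomial truncation, the within-class extreme bootstrap, and the quantile-transform coupling explicitly, whereas the paper offloads all of this to the cited theorem. Two points to flag. First, your claim that the bootstrap maximum sits near the empirical $(1 - m/N)$-quantile is off: the bootstrap max of $m$ draws from $N$ points concentrates around the top $O(N/m)$ order statistics, i.e.\ the $(1 - c/m)$-quantile for constants $c$, which is the same scale as the max of $m$ fresh i.i.d.\ draws; the condition $m/N \to 0$ ensures this is interior, but the relevant scale is $1/m$, not $m/N$. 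Second, and more substantively, your ``continuity set of the joint limit'' step presupposes that such a joint limit exists, which is precisely the content of the paper's auxiliary theorem and is where the tail regularity enters; if you want to avoid that detour, the direct coupling must carry the full weight, i.e.\ you must show that the \emph{class of the maximizer} agrees under the coupling with probability tending to one---which is delicate because the gaps between the per-class maxima can be of the same order as the coupling error.
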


We defer the exact technical statement and proof to the supplementary material. Theorem~\ref{theorem:consistency} shows that the estimated distribution $\pihat_{m,N}$ will match the true BoN output distribution $\pi_m$. It means that MoB with bootstrapped distribution in \eqref{eq:mob_dist} will reach the same accuracy as its oracle version in \eqref{eq:oracle-mob}, but with a larger required budget due to $m < N$. To achieve this, it suffices to pick $m$ such that the condition of Theorem~\ref{theorem:consistency} holds, which is possible by simply using a fixed schedule of the form $m(n) = n^{\alpha}$ for some $0 < \alpha < 1$. In the next section, we will discuss the choice of $m$ in more detail and provide a procedure to choose $m$ automatically.

\subsection{Adaptive Subsample Size $m$}

The choice of $m$ imposes a trade-off. A larger value of $m$ means that we are running BoN with a larger number of samples. Since we expect the success probability of BoN to increase with more samples, this means that the mode of $\pi_m$ will be more likely to be correct. On the other hand, as $m$ becomes larger and closer to $n$, our estimate $\pihat_{m,N}$ of $\pi_m$ becomes more inaccurate. As we saw in Section~\ref{sec:mob-bootstrapping}, bootstrapping might fail to provide a consistent estimate if $m=N$. 

Ideally, we would like to find an $m$ such that our final answer $Z^{\text{MoB}}_{m,N}$ based on the estimated distribution as in \eqref{eq:mob_dist} becomes closest to the Oracle MoB \eqref{eq:oracle-mob} of Section~\ref{sec:mob-oracle}. The natural approach for this goal is to find the value of $m$ that minimizes the distance between $\pihat_{m,N}$ and $\pi_{N}$, that is
\begin{align}
    \label{eq:ideal-adaptive}
    M^*_N = \argmin_m \norm{\pihat_{m,N} - \pi_{N}}_1.
\end{align}
This minimization problem automatically captures both aspects of the trade-off. Large values of $m$ make $\pi_m$, which is approximated by $\pihat_{m,N}$ closer to $\pi_N$, but at the same time if $m$ is too large, the error of this approximation becomes too large and increases the objective $\norm{\pihat_{m,N} - \pi_{N}}_1$.

Unfortunately, the distribution $\pi_N$ in the objective of \eqref{eq:ideal-adaptive} is unknown, and therefore cannot be used in practice. The theoretical results by \citet{gotze2001adaptive} show that if $Z$ only takes two possible values and under some other technical conditions, the distance $\lVert \pihat_{m,N} - \pihat_{m/2,N}\rVert_1$ is proportional to the one in \eqref{eq:ideal-adaptive}\footnote{This is a rough interpretation of the results by \citet{gotze2001adaptive}, where the ratio of the two losses is studied. We refer the reader to the original paper for more details.}:
\begin{align*}
    \lVert \pihat_{m,N} - \pihat_{m/2,N}\rVert_1 \propto \lVert \pihat_{m,N} - \pi_{N}\rVert_1.
\end{align*}
Inspired by this result, \citet{bickel2008choice} provides some optimality results for choosing $m$ by minimizing the more general loss $\lVert \pihat_{m,N} - \pihat_{qm,N}\rVert_1$ for some $0<q<1$ instead of just $q=0.5$ considered by \citet{gotze2001adaptive}.

Based on the findings of \citet{bickel2008choice}, we propose using the following approach to pick $m$. We first consider the candidates of the form $\lfloor q^j N \rfloor$ and pick the value among them that minimizes $\lVert \pihat_{m,N} - \pihat_{qm,N}\rVert_1$.
\begin{align*}
    m_j &= \lfloor q^j N \rfloor &&(j = 0, 1, 2, \ldots),\\
    \hat{M}^*_N &= \argmin_{m = m_j} \lVert \pihat_{m_j,N} - \pihat_{m_{j-1},N}\rVert_1.
\end{align*}
Note that this involves calculating $\pihat_{m,N}$ for all values of $m_j$. These will be just $\bigO{\log N}$ distributions and computationally cheap. Finally, output selected by MoB with adaptive $m$ is
    $Z^{\text{MoB}}_{N} = Z^{\text{MoB}}_{\hat{M}^*_N,N}.$

The choice of $q$ has been observed not to be critical in most applications. \citet{bickel2008choice} observes no significant difference among $q=0.75, 0.65, 0.6, 0.5$. In our experiments, we fix $q = 0.75.$
In Figure \ref{fig:mob-vs-optimal}, we evaluate the efficiency of this procedure to select $m$. For each $N$, we measure the highest accuracy achieved by MoB when choosing $m$ from $\{N^\alpha\}$ for $\alpha \in [0.1,0.9]$.
We plot the accuracy of our adaptive $m$ as well as $m=\sqrt{N}$ approach against this optimal performance for two different settings. These figures show both that adaptive $m$ and the simple $m=\sqrt{N}$ achieve performance close to the optimal $m$ variant. 
This indicates MoB's performance is not sensitive to the choice of $m$ and both the adaptive and simple square root choices achieve a near-optimal performance without hyperparameter tuning. We repeat this comparison in more settings in Appendix~\ref{sec:appendix_m_selection}.

\section{Intuitions and Conditions for Improvement}
\label{sec:insights}
\begin{figure}[t]
    \centering
    \includegraphics[width=1\linewidth]{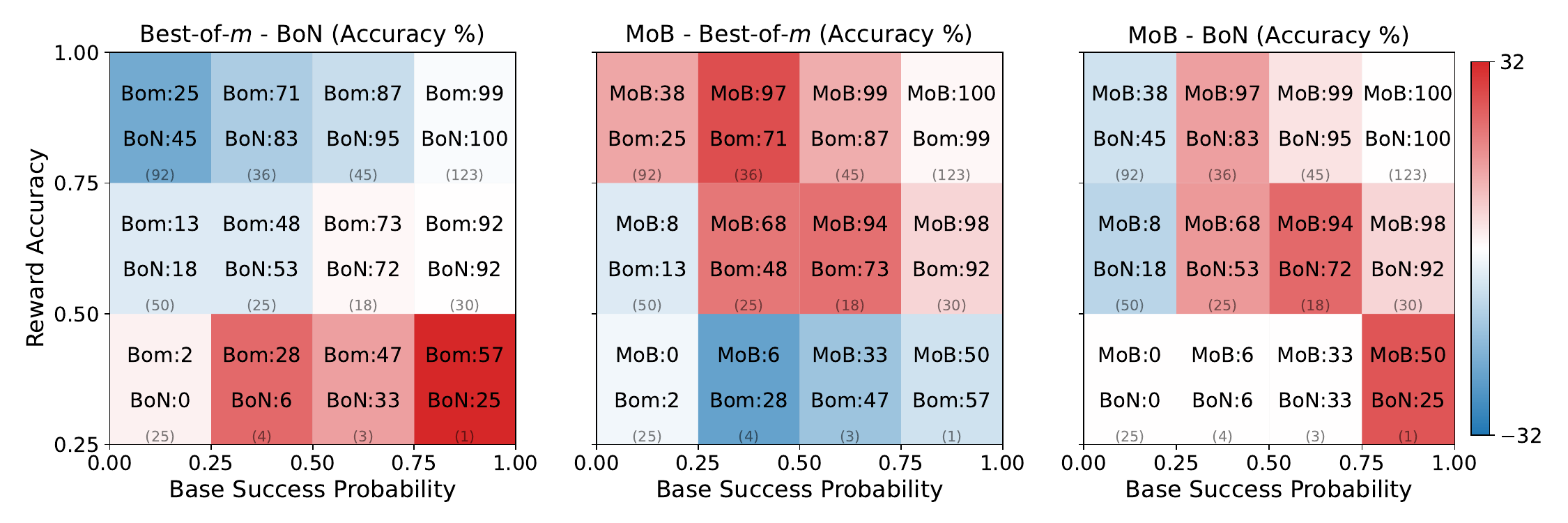}
    \caption{Comparison of Best-of-$m$ ($m$ chosen adaptively) vs. Best-of-$N$ \textit{(Left)}, MoB vs. Best-of-$m$ \textit{(Middle)}, and MoB vs. BoN \textit{(Right)}. Numbers in parentheses are the size of each group.
}
    \label{fig:insights}
\end{figure}

We now investigate why and when MoB outperforms BoN. To this end, we decompose the algorithmic changes from BoN to MoB into two steps, allowing us to study their individual impact more easily. MoB is the result of applying these two changes to BoN:
\begin{enumerate}
\item
\textbf{Change from Best-of-$N$ to Best-of-$m$.} To be able to approximate BoN's output distribution, MoB is forced to work with $\pi_m$, the output distribution of Best-of-$m$, for some $m < N$. 
\item
\textbf{Change from Best-of-$m$ to MoB.} The output of Best-of-$m$ is a sample from its output distribution $\pi_m$. On the other hand, MoB estimates $\pi_m$ and selects its mode.
\end{enumerate}

Together, the effects of these two steps determine whether MoB outperforms BoN. The impact of each step varies by question and depends on the base model’s generation distribution and the reward model’s reward distribution for that question. These distributions—especially the reward distribution—can be complex. To enable an intuitive analysis, we measure two metrics for each question: the base model’s success probability and the reward model’s accuracy, defined as the fraction of incorrect–correct output pairs in which the correct output receives a higher reward. We then categorize questions into $12$ groups according to the value of these two metrics and analyze the effects of our changes in each group.

In Figure~\ref{fig:insights}, we compare the accuracy of Best-of-$N$, Best-of-$m$, and MoB across the groups for MMLU-Pro-Math benchmark with Gemma2-9B base model, ArmoRM reward model, and $N=128$. The left plot compares Best-of-$N$ with Best-of-$m$ to show the impact of the first step. In questions with accurate rewards but weak base model performance, BoN benefits the most with more outputs. Therefore, we observe that using $m$ outputs instead of $N$ has the most negative effect on the performance.
The performance of Best-of-$m$ is compared to MoB in the middle plot of Figure~\ref{fig:insights} to measure the impact of the second step. 
In questions where the mode is correct, MoB will outperform Best-of-$m$ by picking the correct answer with high probability, even if its chance of selection by Best-of-$m$ is low. On the other hand, if the mode is incorrect, MoB will be wrong with high probability, but Best-of-$m$ can still solve the problem by chance. The effect of choosing the mode instead of sampling on the performance depends on the relative number of these two kinds of questions. We expect the mode to be correct more often in groups with accurate enough rewards and base model generations. This is verified by our observation where we see the highest improvement by MoB over Best-of-$m$ for these questions.

Lastly, the right plot in Figure~\ref{fig:insights} shows the combined effect of the two changes and compares MoB with BoN in each group. MoB improves upon BoN the most in questions where the reward and base models are good enough to make the mode correct, but are not good enough to achieve near perfect accuracy.
In Appendix~\ref{sec:app_toy_example}, we study the success probability of MoB and BoN in a synthetic setup with $N=\infty$ and make similar observations.

\section{Experiments}

\begin{figure}[t]
  \centering
  \begin{subfigure}[b]{0.45\textwidth}
        \centering
        \includegraphics[width=0.9\linewidth,height=5cm]{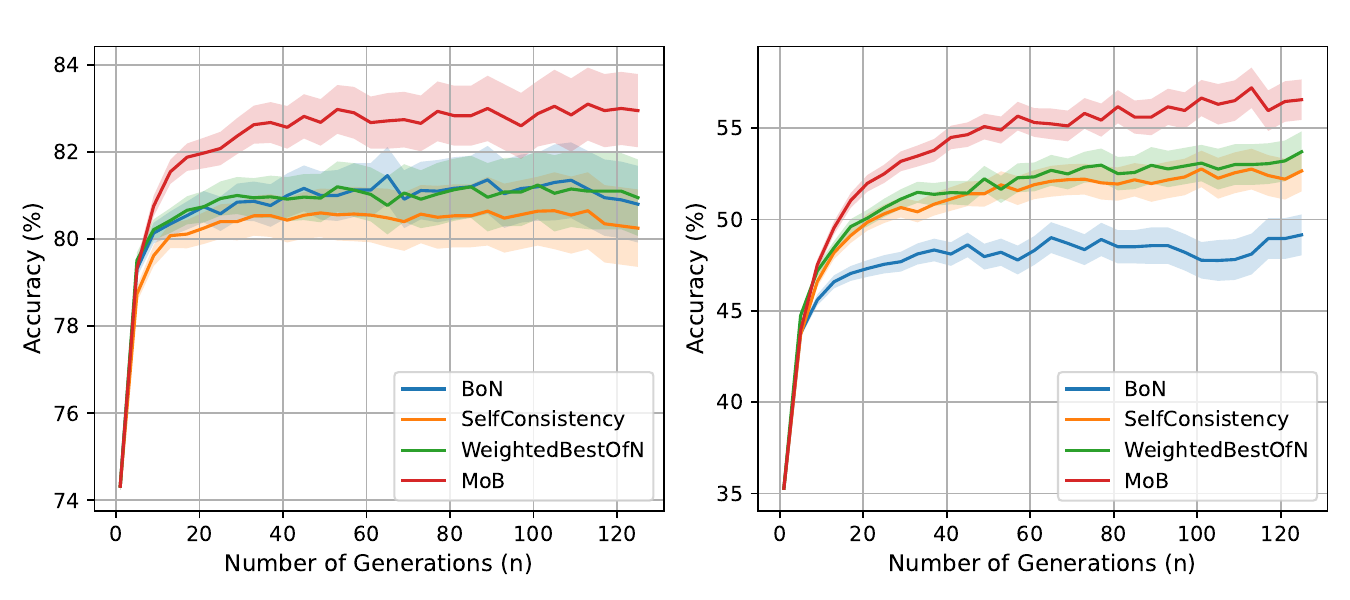}
    \end{subfigure}
    \hfill
    \begin{subfigure}[b]{0.45\textwidth}
        \centering
        \includegraphics[width=0.9\linewidth,height=5cm]{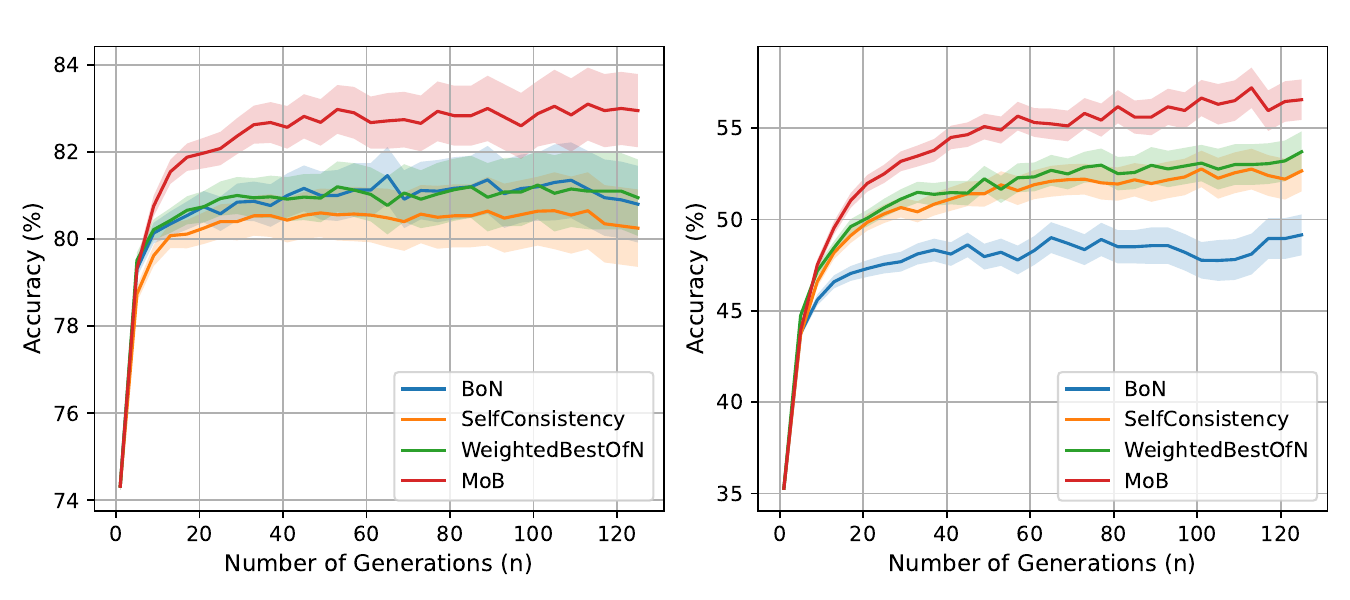}
    \end{subfigure}
  \caption{Accuracy comparison on different datasets using \QwenG base model and \GRM reward model. Standard deviation is shown as the shaded area. \textit{(Left)}: MMLU-Pro-Chem, \textit{(Right)}: GSM8k.}
  \label{fig:mob-vs-others}
\end{figure}

We conducted a series of experiments to compare the performance of our proposed method against other well-known sample-and-marginalize approaches across a range of datasets, generative models, and reward models. The datasets include MATH500 \citep{lightman2023let}, GSM8K \citep{cobbe2021training}, MMLU-Pro \citep{wang2024mmlu} questions in math (MMLU-Pro-Math) and chemistry (MMLU-Pro-Chem), and CommonSenseQA \citep{talmor2018commonsenseqa}. We have experimented with three different generative models from different families and different sizes: Qwen2.5-3B-Instruct \citep{qwen2.5}, Llama-3.1-8B-Instruct \citep{grattafiori2024llama}, and Gemma-2-9B-it \citep{team2024gemma}.
For reward models, we used two widely adopted ORMs: ArmoRM  \citep{ArmoRM} and GRM \citep{yang2024regularizing}, with 8B and 3B parameters, respectively. These choices result in thirty diverse experimental setups that rigorously evaluate our method's performance.

Figure~\ref{fig:mob-vs-others} presents the accuracy of different methods on GSM8K and MMLU-Pro-Chem across varying values of $N$. Our method consistently outperforms the baselines, showing clear improvements even at smaller $N$ values. 
Table~\ref{tab:acc-single-setup} presents the accuracy and its standard error for MoB with adaptive $m$ and $m=\sqrt{N}$ alongside SC, BoN, and WBoN for $N=128$ across all benchmarks, using Qwen and GRM models. Algorithms with statistically insignificant difference to the best algorithm according to a paired one-sided t-test (p-value $> 0.05$) are also shown in bold. 
In Table~\ref{tab:acc-multi-setup}, we report the accuracy on MATH500 for all base and reward model combinations. This table also includes a row showing the performance improvement of our method over BoN. As shown in both tables, MoB consistently outperforms BoN in every setting. These results show the potential of MoB as a strong replacement of BoN in tasks with discrete final answers. Complete results for all thirty experiment configurations are provided in Appendix~\ref{sec:appendix_extra_exps}.

\begin{table}[h]
\vspace{-0.2cm}
\centering
\caption{Results for Qwen2.5-3B and GRM as base and reward models ($N = 128$).}
\normalsize
\renewcommand{\arraystretch}{1.05}
\begin{adjustbox}{max width=\textwidth}
\begin{tabular}{l|ccccc}\toprule
& MATH500 & MMLU-Pro-Math & MMLU-Pro-Chem & GSM8K & CSQA\\
\midrule
BoN &  63.95{\footnotesize $\pm$1.07}  &  66.10{\footnotesize $\pm$1.06}  &  49.00{\footnotesize $\pm$1.12}  &  80.95{\footnotesize $\pm$0.88}  &  \textbf{77.70{\footnotesize $\pm$0.93}} \\ 
SC &  66.40{\footnotesize $\pm$1.06}  &  65.60{\footnotesize $\pm$1.06}  &  52.50{\footnotesize $\pm$1.12}  &  80.40{\footnotesize $\pm$0.89}  &  76.20{\footnotesize $\pm$0.95}  \\ 
WBoN &  67.45{\footnotesize $\pm$1.05}  &  64.35{\footnotesize $\pm$1.07}  &  53.10{\footnotesize $\pm$1.12}  &  81.25{\footnotesize $\pm$0.87}  &  54.90{\footnotesize $\pm$1.11}  \\ 
MoB-Adaptive (Ours) &  \textbf{69.95{\footnotesize $\pm$1.03}} &  69.30{\footnotesize $\pm$1.03}  &  \textbf{56.45{\footnotesize $\pm$1.11}} &  \textbf{82.85{\footnotesize $\pm$0.84}} &  \textbf{77.40{\footnotesize $\pm$0.94}} \\ 
MoB-Poly (Ours) &  \textbf{69.45{\footnotesize $\pm$1.03}} &  \textbf{70.15{\footnotesize $\pm$1.02}} &  \textbf{56.30{\footnotesize $\pm$1.11}} &  \textbf{83.10{\footnotesize $\pm$0.84}} &  \textbf{77.45{\footnotesize $\pm$0.93}} \\ 
\midrule
$\uparrow$MoB over BoN &  \underline{6.00{\footnotesize $\pm$0.78}} &  \underline{3.20{\footnotesize $\pm$0.79}} &  \underline{7.45{\footnotesize $\pm$0.94}} &  \underline{1.90{\footnotesize $\pm$0.51}} &  \underline{-0.30{\footnotesize $\pm$0.52}} \\ 
\end{tabular}
\end{adjustbox}
\label{tab:acc-single-setup}
\end{table}

\vspace{-0.5cm}

\begin{table}[h]
\centering
\caption{Results on MATH500 across all base and reward models ($N = 128$).}
\normalsize
\renewcommand{\arraystretch}{1.05}
\begin{adjustbox}{max width=\textwidth}
\begin{tabular}{l!{\vrule width 1pt} ccc|ccc}
\toprule
& \multicolumn{3}{c|}{\textbf{ArmoRM}} & \multicolumn{3}{c}{\textbf{GRM}}\\
& Llama3.1-8B & Gemma2-9B & Qwen2.5-3B & Llama3.1-8B & Gemma2-9B & Qwen2.5-3B \\ 
\midrule
BoN &  51.55{\footnotesize $\pm$1.12}  &  52.20{\footnotesize $\pm$1.12}  &  60.60{\footnotesize $\pm$1.09}  &  56.65{\footnotesize $\pm$1.11}  &  54.95{\footnotesize $\pm$1.11}  &  63.95{\footnotesize $\pm$1.07}  \\ 
SC &  60.65{\footnotesize $\pm$1.09}  &  52.90{\footnotesize $\pm$1.12}  &  66.40{\footnotesize $\pm$1.06}  &  60.65{\footnotesize $\pm$1.09}  &  52.90{\footnotesize $\pm$1.12}  &  66.40{\footnotesize $\pm$1.06}  \\ 
WBoN &  \textbf{62.90{\footnotesize $\pm$1.08}} &  53.85{\footnotesize $\pm$1.11}  &  67.10{\footnotesize $\pm$1.05}  &  \textbf{63.55{\footnotesize $\pm$1.08}} &  56.15{\footnotesize $\pm$1.11}  &  67.45{\footnotesize $\pm$1.05}  \\ 
MoB-Adaptive (Ours) &  \textbf{62.90{\footnotesize $\pm$1.08}} &  56.15{\footnotesize $\pm$1.11}  &  \textbf{68.50{\footnotesize $\pm$1.04}} &  \textbf{64.30{\footnotesize $\pm$1.07}} &  57.45{\footnotesize $\pm$1.11}  &  \textbf{69.95{\footnotesize $\pm$1.03}} \\ 
MoB-Poly (Ours) &  \textbf{62.40{\footnotesize $\pm$1.08}} &  \textbf{57.05{\footnotesize $\pm$1.11}} &  67.85{\footnotesize $\pm$1.04}  &  \textbf{64.00{\footnotesize $\pm$1.07}} &  \textbf{58.10{\footnotesize $\pm$1.10}} &  \textbf{69.45{\footnotesize $\pm$1.03}} \\ 
\midrule
$\uparrow$MoB over BoN &  \underline{11.35{\footnotesize $\pm$0.86}} &  \underline{3.95{\footnotesize $\pm$0.68}} &  \underline{7.90{\footnotesize $\pm$0.78}} &  \underline{7.65{\footnotesize $\pm$0.80}} &  \underline{2.50{\footnotesize $\pm$0.64}} &  \underline{6.00{\footnotesize $\pm$0.78}} \\ 
\end{tabular}
\end{adjustbox}
\label{tab:acc-multi-setup}
\end{table}

\section{Conclusion and Future Work}

In this paper we highlighted that with imperfect rewards, BoN's chosen answer can be highly stochastic and fail to pick the correct answer reliably. To address this, we introduced Majority-of-the-Bests (MoB), which estimates BoN's output distribution via bootstrapping and chooses the most probable outcome.
MoB achieves superior performance compared to other selection algorithms such as, BoN, Self-consistency, and Weighted BoN outperforming them in most of our 30 experimental setups.
Our method is scalable, requires no hyperparameter tuning, and adds only negligible CPU computational overhead. 
MoB can serve as a strong alternative to BoN and SC in problems with discrete final answers. 
Looking forward, we believe MoB's selection signal could enable early stopping in parallel LLM generation, or be applied more broadly in any framework that relies on sampling from an LLM. However, MoB is limited to settings where the task requires producing a final answer, and like all sampling-based methods, it incurs higher inference costs compared to zero-shot approaches.

\begin{ack}
We thank the anonymous reviewers who provided valuable feedback that led to significant improvements to the paper. AMF acknowledges the funding from the Natural Sciences and Engineering Research Council of Canada (NSERC)
through the Discovery Grant program (2021-03701). \end{ack}
\bibliographystyle{plainnat}
\bibliography{ref}
\newpage

\appendix
\section*{List of Appendices}
We provide a brief description of the material in the appendix of the paper.
\begin{itemize}
    \item Appendix~\ref{sec:app_theory} provides theoretical results on the asymptotic behavior of BoN's output distribution and the proof for Theorem~\ref{theorem:consistency}.
    \item Appendix~\ref{sec:app_close-bootstrap} provides a closed-form calculation of bootstrapped BoN's output distribution for more efficient calculations.
    \item Appendix~\ref{sec:app_toy_example} investigates the effect of reward noise and base model on different algorithms in a synthetic setup.
    \item Appendix~\ref{sec:app_exp_details} provides extra details for the experiments and implementations.
    \item Appendix~\ref{sec:app_extra_exps} provides additional experimental results.
\end{itemize}

\newcommand{\simiid}{ \overset{\text{i.i.d.}}{\sim}}
\newcommand{\Exp}{ \mathrm{Exp}}
\newcommand{\Fz}{F_0}
\newcommand{\Fo}{F_1}
\newcommand{\xz}{x_0}
\newcommand{\xo}{x_1}
\newcommand{\Fbz}{\bar{F}_0}
\newcommand{\Fbo}{\bar{F}_1}
\newcommand{\Rz}{R^0}
\newcommand{\Ro}{R^1}
\newcommand{\Mz}{S^0}
\newcommand{\Mo}{S^1}
\newcommand{\Nz}{{N^0}}
\newcommand{\No}{{N^1}}
\newcommand{\nz}{{n_0}}
\newcommand{\no}{{n_1}}

\section{Theoretical Results}
\label{sec:app_theory}

In this section, we provide the formal theoretical results and the proof of Theorem~\ref{theorem:consistency}. To do so, we first need to show the convergence of BoN's output distribution, which is done in Section~\ref{sec:app_theory_bon_dist} and Theorem~\ref{theorem:bon_dist_asymptotic}. We prove Theorem~\ref{theorem:consistency} in Section~\ref{sec:app_theory_consistency}.

\subsection{Asymptotic Behavior of BoN's Output Distribution}
\label{sec:app_theory_bon_dist}
\begin{theorem}
    \label{theorem:bon_dist_asymptotic}
    For final answer $z$ such that $\piref(z) \in (0,1)$, let $\Fz$ and $\Fo$ represent cumulative distribution functions (CDFs) of the conditional distributions $\Pr{r(Y) | f(Y) = z}$ and $\Pr{r(Y) | f(Y) \ne z}$, respectively. Define $\xz$ and $\xo$ to be their right endpoints,
    \begin{align*}
     \xz \defeq \sup \{x \in \reals : \Fz(x) < 1\}, \quad \xo \defeq \sup \{x \in \reals : \Fo(x) < 1\}.
     \end{align*}
     As $N \to \infty$, 
     \begin{itemize}
    \item[(i)] if $\xz < \xo$, we have $\pi_N(z) \to 0$.
    \item[(ii)] if  $\xz > \xo$, we have $\pi_N(z) \to 1$.
    \item[(iii)] if  $\xz = \xo = x^*$, $\Fz$ and $\Fo$ are continuous and strictly increasing, and for some $c \in [0, \infty]$,
    \begin{align}
        \label{eq:tail_equivalence}
        \lim_{x \uparrow x^*} \frac{1 - \Fz(x)}{1 - \Fo(x)} = c,
    \end{align}
    then we have,
    \begin{align*}
        \pi_N(z) \to \frac{c \cdot \piref(z)}{1 + (c-1) \cdot \piref(z)}.
    \end{align*}
     \end{itemize}

\end{theorem}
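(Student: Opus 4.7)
The plan is as follows. Let $p = \piref(z)$ and $q = 1-p$. I would partition the $N$ outputs according to whether $f(Y_i) = z$: this yields $\Nc \sim \Bin(N,p)$ ``correct'' outputs whose rewards are i.i.d.\ from $\Fz$, and $\Nw = N - \Nc$ ``incorrect'' ones with rewards i.i.d.\ from $\Fo$, the two groups being independent given the counts. Writing $M_c, M_w$ for the respective reward maxima, continuity of $\Fz,\Fo$ ensures $\pi_N(z) = \Pr{M_c > M_w}$ (ties have probability zero). Since $p,q \in (0,1)$, the strong law gives $\Nc, \Nw \to \infty$ almost surely with $\Nw/\Nc \to q/p$.

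For (i) and (ii), I would argue directly. If $\xz < \xo$, pick any $\eta \in (\xz, \xo)$: then $M_c \le \xz < \eta$ deterministically, while $\Pr{M_w \le \eta \smiddle \Nw} = \Fo(\eta)^{\Nw} \to 0$ almost surely (since $\Fo(\eta) < 1$ and $\Nw \to \infty$). Bounded convergence yields $\pi_N(z) \to 0$; case (ii) is symmetric.

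Case (iii) is the substantive work. I would condition on $\Nc = \nz, \Nw = \no$ and use the identity
\[
\Pr{M_c > M_w \smiddle \Nc = \nz, \Nw = \no} \;=\; \nz \int_{-\infty}^{x^*} \Fo(x)^{\no}\,\Fz(x)^{\nz - 1}\, d\Fz(x).
\]
Substituting $s = \Fz(x)$ (valid by strict monotonicity and continuity) and then $v = \nz(1-s)$ transforms this into
\[
\int_0^{\nz} \Fo\!\bigl(\Fz^{-1}(1 - v/\nz)\bigr)^{\no}\, (1 - v/\nz)^{\nz - 1}\, dv.
\]
Pointwise in $v$, the second factor tends to $e^{-v}$ and is dominated by $e^{-v/2}$ for $\nz \ge 2$. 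The tail-equivalence hypothesis gives $\Fbo(\Fz^{-1}(1-v/\nz)) = (v/(c\nz))\,(1 + o(1))$, so together with $\no/\nz \to q/p$ the first factor converges pointwise to $\exp(-vq/(cp))$. Dominated convergence then produces
\[
\int_0^{\infty} e^{-v(1 + q/(cp))}\, dv \;=\; \frac{cp}{cp + q} \;=\; \frac{cp}{1 + (c-1)p},
\]
and bounded convergence removes the conditioning via the almost-sure convergence $\Nw/\Nc \to q/p$.

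The main obstacle is upgrading pointwise tail equivalence into a bound usable inside the dominated-convergence step. I would split the $v$-range: on compact intervals, tail equivalence gives uniform control of $\Fbo/\Fbz$ after the $1/\nz$ quantile rescaling, and on the complement the crude bounds $\Fo(\cdot)^{\no}\le 1$ and $(1-v/\nz)^{\nz-1}\le e^{-v/2}$ suffice. The boundary values $c \in \{0,\infty\}$ are better handled by direct truncation arguments in the spirit of (i) and (ii): if $c = 0$, the correct tail is asymptotically negligible relative to the incorrect one, forcing $\pi_N(z) \to 0$, and $c = \infty$ is the mirror statement giving $\pi_N(z) \to 1$. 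Both limits are consistent with the interior formula $cp/(1 + (c-1)p)$ under the natural conventions.
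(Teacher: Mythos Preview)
Your proposal is correct and takes a genuinely different route from the paper. Where you work analytically---writing $\Pr{M_c > M_w \mid N_c, N_w}$ as an explicit integral, substituting $v = n_0(1-F_0(x))$, and passing to the limit by dominated convergence---the paper works probabilistically. It rewrites $\Pr{M_c > M_w}$ as $\Pr{N\bar F_1(M_c) < N\bar F_1(M_w)}$ and then establishes the joint convergence in distribution
\[
\bigl(N\bar F_1(M_c),\, N\bar F_1(M_w)\bigr) \ \todist\ \bigl(E/(cp),\, F/q\bigr),
\]
with $E, F$ independent standard exponentials. The ingredients are: the probability integral transform, giving that $n_1 \bar F_1$ applied to the max of $n_1$ i.i.d.\ draws from $F_1$ converges in law to $\mathrm{Exp}(1)$; the tail-equivalence hypothesis, which converts $n_0 \bar F_0$ of the $F_0$-maximum (again $\mathrm{Exp}(1)$ in the limit) into $n_0 \bar F_1$ of the same quantity converging to $\mathrm{Exp}(1)/c$; and Slutsky to absorb the random counts via $N_c/N \to p$, $N_w/N \to q$. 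The answer then drops out as $\Pr{E/(cp) < F/q} = cp/(cp+q)$.

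Each approach has its merits. The paper's argument is shorter and more conceptual once one recognizes the extreme-value structure, and it sidesteps entirely the splitting and domination technicalities you correctly flag as the main obstacle in your route; the boundary cases $c \in \{0,\infty\}$ are also absorbed automatically since the limiting variable degenerates to $0$ or $+\infty$. Your approach is more elementary---it needs no weak-convergence machinery beyond dominated and bounded convergence---and makes the role of the tail ratio completely explicit inside the integrand. Both are sound; the paper's is slicker, yours more self-contained.
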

    \begin{proof}
     We first define some random variables to better express $\pi_N(z)$. Assume we use $\Fz$ and $\Fo$ to generate i.i.d. samples $\Rz_1, \Rz_2, \ldots \simiid \Fz$ and $\Ro_1, \Ro_2, \ldots \simiid \Fo$. For $n \ge 1$, let $\Mz_n$ and $\Mo_n$ be the maximum of the first $n$ samples from $\Fz$ and $\Fo$, that is,
         \begin{align*}
                \Mz_n \triangleq \max_{i=1, \ldots, n} \Rz_i \quad , \quad \Mo_n \triangleq \max_{i=1, \ldots, n} \Ro_i.
         \end{align*} 
         Also, for outputs $Y_1, \ldots, Y_N$, let $Z_i = f(Y_i)$, $\Nz$ be the number of outputs that reach the final answer $z$, and $\No = N- \Nz$ be the number of outputs that do not reach the final answer $z$. 

         We can express $\pi_N(z)$ as
         \begin{align}
            \notag
                \pi_N(z) &=
                \sum_{z_{1:N}} \Pr{Z^\text{Best}_N = z |Z_{1:N} = z_{1:N}} \cdot \Pr{Z_{1:N} = z_{1:N}} \\
                \label{eq:appendix_piN1}
                &= \sum_{z_{1:N}} \Pr{\max_{z_i = z}r(Y_i) > \max_{z_i \ne z} r(Y_i) |Z_{1:N} = z_{1:N}} \cdot \Pr{Z_{1:N} = z_{1:N}}.
         \end{align}
         Now, note that due $Y_1, \ldots, Y_N$ being i.i.d., we have
        \begin{align*}
            \Pr{r(Y_1), \ldots, r(Y_N) | Z_{1:N} = z_{1:N}} &=  \prod_i \Pr{r(Y_i) | Z_i = z_i}.
        \end{align*}
        By definition of $\Rz_i$ and $\Ro_i$, we can therefore write \eqref{eq:appendix_piN1} as
        \begin{align*}
                \pi_N(z)
                &= \sum_{z_{1:N}} \Pr{\Mz_\Nz > \Mo_\No | Z_{1:N}=z_{1:N}} \cdot \Pr{Z_{1:N} = z_{1:N}} = \Pr{\Mz_\Nz > \Mo_\No}.
        \end{align*}
        For simplicity, we define $\Mo \triangleq \Mo_\No$ and $\Mz \triangleq \Mz_\Nz$.
        Now, we can express $\pi_N(z)$ as
        \begin{align*}
            \pi_N(z) &= \Pr{\Mz > \Mo}.
        \end{align*}
        
        Note that $\Mz \todist \xz$ and $\Mo \todist \xo$, which leads to the statement for cases \textit{(i)} and \textit{(ii)}. We focus on case \textit{(iii)}. 
        Let $\Fbz(x) \defeq 1 - \Fz(x)$ and $\Fbo(x) \defeq 1 - \Fo(x)$ be the complementary CDFs of $\Fz$ and $\Fo$, respectively. 
        To quantify $\Pr{\Mz > \Mo}$, we note that $\Fbo$ is strictly decreasing in a neighborhood of $\Mo$. Thus,
        \begin{align}
            \label{eq:bon_dist_plan}
            \lim_{N \to \infty} \pi_N(z) = \lim_{N \to \infty} \Pr{\Mz > \Mo} = \lim_{N \to \infty} \Pr{N \Fbo(\Mz) < N\Fbo(\Mo)} .
        \end{align}
        Therefore, we turn to study the joint distribution of $(N\Fbo(\Mz), N\Fbo(\Mo))$ as $N \to \infty$. This will be achieved by quantifying the distribution of $(n_0 \Fbo(\Mz_{n_0}), n_1 \Fbo(\Mo_{n_1}))$ as $n_0,n_1 \to \infty$ and relating it to the distribution of $(N\Fbo(\Mz), N\Fbo(\Mo))$.

        Since $\Fo$ is continuous, $\Fo(\Ro_i) \sim U[0,1]$ is uniformly distributed for any $i$. Define $U_i = \Fbo(\Ro_i) \sim U[0,1]$. It is well known that 
        \begin{align*}
            \no\min_{i=1, \ldots, \no} U_i \todist \Exp(1) && (\no \to \infty),
        \end{align*}
        which due to $\min_i \Fbo(\Ro_i) = \Fbo(\Mo_\no)$, translates to 
        \begin{align}
            \label{eq:dist_conv1}
            \no \Fbo(\Mo_{n_1}) \todist \Exp(1)  &&  (\no \to \infty).
        \end{align}
        Similarly, we can show that $\nz \Fbz(\Mz_{n_0}) \todist \Exp(1)$ as $n_0 \to \infty$. However, our goal is to analyze the distribution of $\nz \Fbo(\Mz_{n_0})$. To do so, we use the tail-equivalence condition \eqref{eq:tail_equivalence}. We note that $\Mz_{\nz} \todist x^*$, therefore, $\Fbz(\Mz_{\nz}) / \Fbo(\Mz_{\nz}) \todist c$ as $\nz \to \infty$. Together, we get
        \begin{align}
            \label{eq:dist_conv2}
            \nz \Fbo(\Mz_{n_0}) &= \frac{\nz \Fbz(\Mz_{n_0})}{\Fbz(\Mz_{n_0})/\Fbo(\Mz_{n_0})} \todist \frac{\Exp(1)}{c} && (\nz \to \infty).
        \end{align}

        Due to the independence of $\Mo_\no$ and $\Mz_\nz$, we can combine \eqref{eq:dist_conv1} and \eqref{eq:dist_conv2} to get
        \begin{align*}
            \left( \nz \Fbo(\Mz_{\nz}), \no \Fbo(\Mo_{\no}) \right) &\todist \left( E/c, F \right) && (\nz, n_1 \to \infty),
        \end{align*}
        where $E,F \simiid \Exp(1)$. As $N \to \infty$, we have $\Nz, \No \toprob \infty$, therefore,
        \begin{align*}
            \left( \Nz \Fbo(\Mz_{\Nz}), \No \Fbo(\Mo_{\No}) \right) &\todist \left( E/c, F \right) && (N \to \infty).
        \end{align*}
        Finally, we use the fact that $\Nz/N \todist \piref(z)$ and $\No/N \todist 1 - \piref(z)$ to get
         \begin{align}
            \label{eq:joint_dist}
            \left( N \Fbo(\Mz), N \Fbo(\Mo) \right) = \left( \frac{\Nz \Fbo(\Mz_{\Nz})}{\Nz/N}, \frac{\No \Fbo(\Mo_{\No})}{\No/N} \right) &\todist \left( \frac{E}{c\cdot \piref(z)}, \frac{F}{1 - \piref(z)} \right).
        \end{align}
        Combined with \eqref{eq:bon_dist_plan}, we conclude that
        \begin{align*}
            \lim_{N \to \infty} \pi_N(z) &= \Pr{\frac{E}{c\cdot \piref(z)} < \frac{F}{1 - \piref(z)}} = \frac{c\piref(z)}{1 - \piref(z) + c\piref(z)}.
        \end{align*}

    \end{proof}

\newcommand{\bbE}{\mathbb{E}}
\newcommand{\bbP}{\mathbb{P}}
\newcommand{\bbR}{\mathbb{R}}
\newcommand{\1}{\mathbf{1}}
\newcommand{\iid}{\stackrel{\text{iid}}{\sim}}
\newcommand{\bern}{\operatorname{Bernoulli}}
\newcommand{\todo}[1]{\textcolor{red}{\textbf{[#1]}}}

\subsection{Proof of Theorem~\ref{theorem:consistency}}
\label{sec:app_theory_consistency}

We restate Theorem~\ref{theorem:consistency} with the assumptions not included in the main text.

\begin{theorem}
    \label{theorem:consistency_full}
    Assume that there are finite possible values for $Z$ and for every possible final answer $z$, the conditions of Theorem~\ref{theorem:bon_dist_asymptotic} for one the cases hold. If as $N \to \infty$, we have $m \to \infty$ and $m/N \to 0$, then for any $\epsilon > 0$, the estimated $\pihat_{m,N}$ will converge to the true distribution $\pi_m$. That is,
    \begin{align*}
         \lim_{n \to \infty}\Pr{\norm{\pihat_{m,N} - \pi_m}_1 \ge \epsilon} = 0.
    \end{align*}
\end{theorem}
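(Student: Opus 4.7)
The plan is to reduce the $\ell_1$-convergence to a pointwise convergence $|\pihat_{m,N}(z) - \pi_m(z)| \toprob 0$ for each of the finitely many possible final answers $z$, then take a union bound. Because Theorem~\ref{theorem:bon_dist_asymptotic} already yields a deterministic limit $\pi_m(z) \to \pi_\infty(z)$ as $m \to \infty$, it suffices to prove $\pihat_{m,N}(z) \toprob \pi_\infty(z)$ under the joint asymptotics, for each of the three cases of Theorem~\ref{theorem:bon_dist_asymptotic}.

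The strategy is to re-run the proof of Theorem~\ref{theorem:bon_dist_asymptotic} at the empirical level. Condition on $Y_1, \ldots, Y_N$: let $\Nz_N$ and $\No_N = N - \Nz_N$ count the original outputs with $f(Y_i) = z$ and $f(Y_i) \ne z$, and let $\hat{F}_{0,N}, \hat{F}_{1,N}$ denote the corresponding empirical reward CDFs. In a single bootstrap resample of size $m$, the count $\hat{N}^0$ of items with final answer $z$ is $\Bin(m, \Nz_N/N)$, and conditional on $\hat{N}^0$ the rewards are i.i.d.\ from $\hat{F}_{0,N}$, independently of the other side. By Glivenko-Cantelli, $\hat{F}_{j,N} \to F_j$ uniformly a.s.\ for $j \in \{0,1\}$, and by the strong law $\Nz_N/N \to \piref(z)$ a.s., hence $\hat{N}^0/m \toprob \piref(z)$. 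The empirical analog of the exponential-limit step is actually cleaner than in the original proof: the bootstrap maximum on the incorrect side equals the $I$-th order statistic of $\{r(Y_j) : f(Y_j) \ne z\}$, where $I$ is the maximum of $\hat{N}^1$ uniformly random indices drawn in $\{1, \ldots, \No_N\}$. A direct calculation gives $(m/\No_N)(\No_N - I) \todist \Exp(1)$ under $m/\No_N \to 0$, and since $\bar{F}_{1,N}(\hat{S}^1) = (\No_N - I)/\No_N$ by construction, we obtain $m \bar{F}_{1,N}(\hat{S}^1) \todist \Exp(1)$. An analogous statement holds on the correct-answer side.

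The main obstacle is the empirical tail-equivalence step in case (iii): showing $\bar{F}_{0,N}(\hat{S}^0)/\bar{F}_{1,N}(\hat{S}^0) \toprob c$. In the original proof, this followed immediately from $\hat{S}^0 \toprob x^*$ combined with the assumed pointwise limit $\bar{F}_0(x)/\bar{F}_1(x) \to c$ as $x \uparrow x^*$. In the empirical setting, both $\bar{F}_{0,N}$ and $\bar{F}_{1,N}$ vanish in a shrinking neighborhood of $x^*$, so uniform Glivenko-Cantelli alone is insufficient and ratio-uniform control of the empirical tails is required. I would handle this by noting that $\hat{S}^0$ sits near the $(1-\Theta(m/N))$-quantile of the rewards conditional on $f(Y)=z$, a regime where the empirical tails are still supported by $\Theta(m)$ samples on each side, so a relative Chernoff bound combined with the continuity, strict monotonicity, and tail-equivalence assumptions of Theorem~\ref{theorem:bon_dist_asymptotic} delivers the ratio convergence. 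Once this is established, Slutsky and conditional independence yield a joint weak limit matching \eqref{eq:joint_dist}, and the continuous mapping theorem applied to $\pihat_{m,N}(z) = \bbP(\hat{S}^0 > \hat{S}^1 \mid Y_1, \ldots, Y_N)$ gives $\pihat_{m,N}(z) \toprob \pi_\infty(z)$. Cases (i) and (ii) are easier, requiring only that $\hat{S}^0$ and $\hat{S}^1$ concentrate at $\xz$ and $\xo$ respectively, which follows from standard empirical-quantile consistency.
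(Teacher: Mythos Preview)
Your approach is sound but genuinely different from the paper's. The paper does not re-run the extreme-value analysis at the empirical level; instead, it casts the question as an instance of general $m$-out-of-$n$ bootstrap consistency and invokes Theorem~2 of Bickel, G\"otze, and van Zwet (the reference \texttt{bickel2011resampling}). Concretely, it sets $X_i = (Z_i, R_i)$, defines the statistic $T_m = \indic{Z^{\text{Best}}_m = z} + D/4$ with an auxiliary $D \sim \Bern(1/2)$ added solely to prevent a degenerate limit law, and verifies that theorem's hypotheses: convergence in distribution of $T_m$ (immediate from Theorem~\ref{theorem:bon_dist_asymptotic}), the uniform-continuity condition $\delta_m(1 - xm^{-1/2}) \to 0$ (which also follows from the mere convergence of $\pi_m(z)$), and a triviality about the statistic not depending on the sampling law. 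The whole argument never touches empirical tails; Theorem~\ref{theorem:bon_dist_asymptotic} is used only to supply the limit $\pi_\infty(z)$, not to be replayed.

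Your route is more constructive: you rebuild the limit calculation on top of the empirical distribution, which yields a self-contained argument and makes explicit where $m/N \to 0$ enters. The cost is exactly the tail-equivalence step you flag, which the paper's black-box citation sidesteps entirely. If you pursue your route, two slips are worth fixing: (a) $\hat S^0$ sits near the $(1 - \Theta(1/m))$-quantile of $F_0$, not the $(1 - \Theta(m/N))$-quantile, so the relevant tails are supported by $\Theta(N/m)$ samples on each side rather than $\Theta(m)$ --- this only strengthens the relative-concentration argument since $N/m \to \infty$; and (b) the last step is not the continuous mapping theorem but a Portmanteau-type argument for conditional weak convergence in probability, requiring that $\{\hat S^0 > \hat S^1\}$ be a continuity set of the limiting product-exponential law, which it is. Neither issue is fatal to the strategy.
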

\begin{proof}
    Since there are finite possible values for $Z$, it suffices to show the convergence in estimated probability of each possible final answer $z$. We show that for any $z$, and $\epsilon > 0$, we have
    \begin{align}
        \label{eq:single_convergence}
        \lim_{N \to \infty} \Pr{\abs{\pihat_{m,N}(z) - \pi_m(z)} \ge \epsilon} = 0.
    \end{align}
    We use the result by \citet[Equation 3.14]{bickel2011resampling} to show this claim. To do so, we first frame our problem in their notation. For $1 \le i \le N$, let $Z_i \defeq f(Y_i)$ be (the one-hot encoding of) the final answer reached by $Y_i$, and $R_i \defeq r(Y_i)$ be the numerical reward of $Y_i$. We define 
    \begin{align*}
        X_i \defeq (Z_i, R_i).
    \end{align*}
    We define the bootstrap statistic of $X_1, \ldots, X_m$ as 
    \begin{align*}
        T_m = \indic{Z^\text{Best}_m = z} + \frac{D}{4} \sim L_m,
    \end{align*}
    where $\indic{\cdot}$ is the indicator function, $D \sim \bern(0.5)$ is an independent Bernoulli random variable, and $L_m$ is defined to be the distribution of $T_m$. Basically, $T_m$ is the indicator of $z$ being selected by BoN, plus a small random noise to ensure the non-degeneracy condition as $m \to \infty$. We define the function $h(t) = \indic{t > 0.5}$, so that the parameter of interest $\theta_m$ becomes
    \begin{align*}
        \theta_m \defeq \E{h(T_m)} = \pi_m(z),
    \end{align*}
    as intended.
    Lastly, one can verify that since $T_m$ is invariant of repetitions and permutations of its inputs $X_1, \ldots, X_m$, in our case, we have for any $0<x<1$,
    \begin{align*}
        \delta_m(x) \defeq \abs{\pi_{\lfloor mx \rfloor}(z) - \pi_m(z)}.
    \end{align*}

    We now show the conditions of \citet[Theorem 2]{bickel2011resampling}. First, we need to show that $L_m$, the distribution of $T_m$, is convergent. According to Theorem~\ref{theorem:bon_dist_asymptotic}, we have
    \begin{align*}
        \lim_{m \to \infty} \pi_m(z) \defeq \pi_\infty(z)
    \end{align*}
    for some $\pi_\infty(z) \in [0,1]$. Therefore, as $m \to \infty$, we have
    \begin{align*}
        L_m \todist \bern(\pi_\infty(z)) + \frac{\bern(0.5)}{4}.
    \end{align*}

    For condition \citet[Equation 3.11]{bickel2011resampling} we need to show that for any $M < \infty$, we have 
    \begin{align*}
        \delta_m(1 - xm^{-1/2}) \to 0
    \end{align*}
    uniformly for all $0 < x < M$. By definition, it suffices to show that for any $0 < x < M$, we have
    \begin{align*}
        \abs{\pi_{\lfloor m - x\sqrt{m} \rfloor}(z) - \pi_m(z)} \to 0.
    \end{align*}
    This follows from the fact that $\pi_m(z)$ is convergent to $\pi_\infty(z)$. For any $\varepsilon > 0$, pick $M_0$ such that for any $m_0 \ge M_0$, we have
    \begin{align*}
        \abs{\pi_{m_0}(z) - \pi_\infty(z)} < \frac{\varepsilon}{2},
    \end{align*}
    and $M_1$ such that for any $M_1 - M \sqrt{M_1} \ge M_0$. Then for any $m \ge M_1$, we have
    \begin{align*}
        \abs{\pi_{\lfloor m - x\sqrt{m} \rfloor}(z) - \pi_\infty(z)} < \varepsilon/2 \quad \text{and} \quad
        \abs{\pi_{m}(z) - \pi_\infty(z)} < \varepsilon/2.
    \end{align*}
    Together, we have
    \begin{align*}
        \abs{\pi_{\lfloor m - x\sqrt{m} \rfloor}(z) - \pi_m(z)} < \varepsilon
    \end{align*}
    and achieve the uniform convergence condition.

    Finally, note that our statistic $T_m$ is not dependent on the sampling distribution $\Dref$ and \citet[Equation 3.13]{bickel2011resampling} is satisfied.
\end{proof}

\section{Closed-Form Calculation of Bootstrapped BoN's Output Distribution}
\label{sec:app_close-bootstrap}

In Section~\ref{sec:mob-bootstrapping}, we proposed approximating $\pihat_{m,N}$ by running BoN on a large number $B$ of subsets of size $m$ sampled with replacement from the $N$ generated outputs. In practice, $B=10,000$ is commonly considered sufficient. This calculation is negligible compared to the generation of outputs from the LLM and can be carried out on a CPU. Nonetheless, we here show that it can also be done in $\bigO{N\log N}$.

Define $R_i = r(Y_i)$ for $1 \le i \le N$, and let $i_1, i_2, \ldots, i_N$ be such that
\begin{align*}
    R_{i_1} < R_{i_2} < \ldots < R_{i_N}.
\end{align*}
For simplicity, we assume no ties occur among the rewards. The key insight is that for any $1 \le k \le N$, the probability of $Y_{i_k}$ being selected in a randomly sampled subset of $m$ outputs can be calculated in closed-form. We note that $Y_{i_k}$ is selected if the subset only includes outputs among $Y_{i_1}, \ldots, Y_{i_{k}}$, but is not limited to $Y_{i_1}, \ldots, Y_{i_{k-1}}$ (and therefore contains $Y_{i_k}$). We get
\begin{align*}
    \Pr{Y_{i_k} \text{\; is the output of BoN on a resampled subset}} = \left(\frac{k}{N}\right)^m - \left(\frac{k-1}{N}\right)^m.
\end{align*}
Thus, for any final answer $z$, the probability of it being selected in a subset is
\begin{align*}
    \pihat_{m,N}(z) =  \sum_{k: Z_{i_k} = z} \left(\frac{k}{N}\right)^m - \left(\frac{k-1}{N}\right)^m.
\end{align*}
This procedure only requires sorting the outputs according to their rewards and therefore has complexity of $\bigO{N \log N}$.

\section{Effect of Reward Noise and Base Model's Success Probability}
\label{sec:app_toy_example}

\begin{figure}[htbp]
    \centering
    \includegraphics[width=1\textwidth]{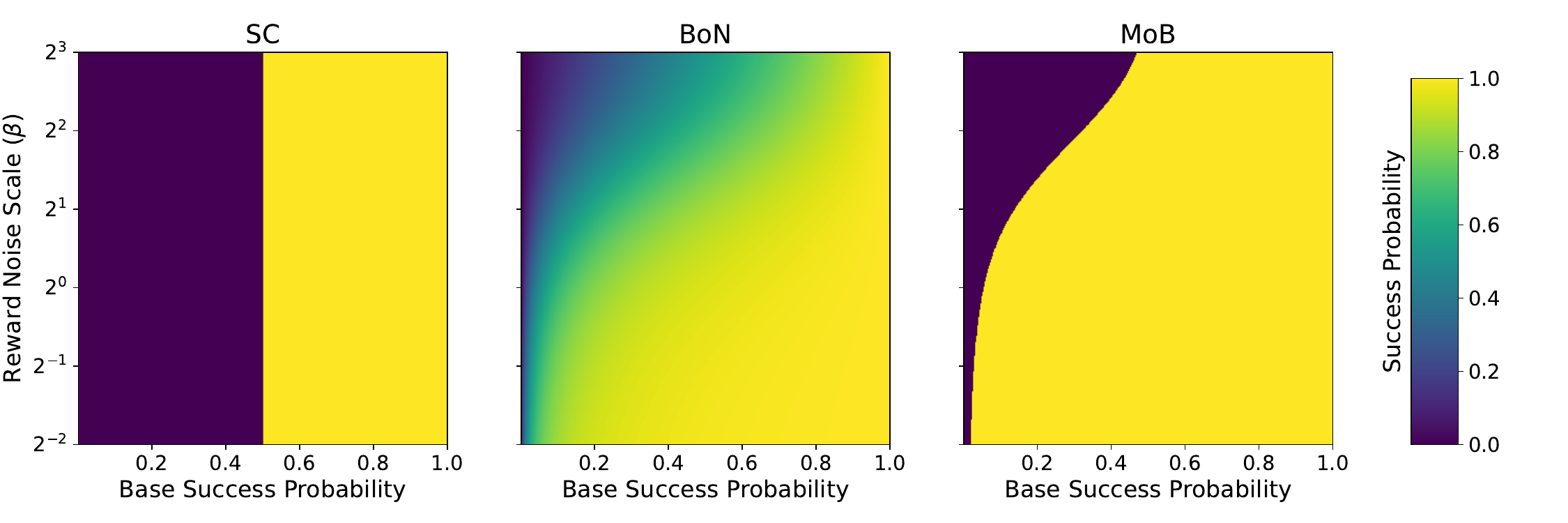}
    \caption{Success probability of SC, BoN, and MoB with infinite budget ($N=\infty$) for different values of the base model's success probability and reward noise. }
    \label{fig:reward_noise_success}
\end{figure}

In this section, we investigate the effect of the base model and reward noise on the success probability of SC, BoN, and MoB. We consider a synthetic setup for a TRUE/FALSE question, where the correct answer is TRUE. 
Let $p$ be the success probability of the base model, which is the probability that the base model generates a solution reaching the correct final answer.

Assume $r^\text{oracle}$ is an oracle reward model that always assigns the reward of $1$ to solutions that reach the correct answer, and $0$ otherwise:
\begin{align*}
    r^\text{oracle}(Y) = \begin{cases}
        1, & \text{if } f(Y) = \text{TRUE}, \\
        0, & \text{if } f(Y) = \text{FALSE}.
    \end{cases}
\end{align*}
To investigate the effect of an imperfect reward model, we consider a noisy reward model $r^\text{noisy}$ that is equal to the oracle reward plus an exponentially distributed noise:
\begin{align*}
    r^\text{noisy}(Y) = r^\text{oracle}(Y) + \text{Exp}(1/\beta).
\end{align*}
The parameter $\beta$ controls the noise level, where a larger $\beta$ indicates a noisier reward model. To see this, note that the expected value and the standard deviation of the noise are equal to $\beta$. If $\beta$ is large, the noise will dominate the signal from the oracle reward, and the noisy reward model will be less informative.

We visualize the success probability of SC, BoN, and MoB with infinite budget $N = \infty$ in Figure~\ref{fig:reward_noise_success}. SC's success probability, as shown in the left plot of Figure~\ref{fig:reward_noise_success}, is independent of the reward noise. It is either equal to $1$ when $p > 0.5$ (the correct answer is the most probable answer), or equal to $0$ otherwise. For BoN, consider two extreme cases for the reward noise. When the reward model is perfect ($\beta$ small), BoN's success probability is $1$ regardless of the base model's success probability. This is shown in the bottom edge of the middle plot in Figure~\ref{fig:reward_noise_success}. In this case, BoN is preferable over SC. On the other hand, when the reward model is completely uninformative ($\beta$ large), BoN's success probability is equal to the base model's success probability. This is shown in the top edge of the middle plot in Figure~\ref{fig:reward_noise_success}. MoB's success probability is equal to $1$ if BoN's success probability is at least $0.5$, as shown in the right plot of Figure~\ref{fig:reward_noise_success}. We see that MoB shows a similar behavior to SC when the reward model is uninformative, and when the reward model is perfect, MoB behaves like BoN. 

In this setup, we can study the success probability of BoN and MoB with an infinite budget $N = \infty$ theoretically. 
BoN's success probability depends on the reward's noise level. It can be calculated from Theorem~\ref{theorem:bon_dist_asymptotic} as
\begin{align*}
   \text{BoN success probability with infinite budget} = \frac{e^{1/\beta}p}{1-p+e^{1/\beta}p}. 
\end{align*}
Note that if the reward model is perfect ($\beta = 0$), both the numerator and denominator go to infinity, and we reach the success probability of $1$. With $\beta=\infty$, the noise becomes dominant, and BoN's success probability remains equal to the base model $p$ even with infinite budget. Due to Theorem~\ref{theorem:consistency}, MoB solves the problem if the correct answer is BoN's most probable outcome. Therefore,
\begin{align*}
   \text{MoB success probability with infinite budget} = \begin{cases}
        1, & \text{if } \frac{e^{1/\beta}p}{1-p+e^{1/\beta}p} > 0.5, \\
        0, & \text{otherwise} .
    \end{cases}
\end{align*}
This is favorable over BoN in scenarios where BoN still prefers the correct answer, as it can find the correct answer reliably without randomness. 

\section{Implementation and Experiment Details}
\label{sec:app_exp_details}
In this section, we provide more details on how the experiments in the paper are conducted.

\subsection{Evaluation Experiments}

\paragraph{Benchmarks.} We run our experiments on five popular benchmarks. MATH500, first introduced by \citet{lightman2023let}, is a randomly sampled subset of 500 math questions with short final answers from the MATH dataset \citep{hendrycks2021measuring}. We use the math and chemistry questions from the MMLU-Pro benchmark \citep{wang2024mmlu}, which includes multiple-choice questions on a variety of topics. We also run our experiments on GSM8K \citep{cobbe2021gsm8k} that contains grade school math questions in short final answer format. Lastly, we use the CommonsenseQA benchmark \citep{commonsenseqa2019} that tests the model's commonsense reasoning through multiple-choice questions. For all benchmarks, we randomly select 500 questions for our experiments.

\paragraph{Base and Reward Models.} We have used the models \texttt{Qwen/Qwen2.5-3B-Instruct}, \texttt{google/gemma-2-9b-it}, and \texttt{meta-llama/Llama-3.1-8B-Instruct} from Huggingface as base generative models. Our reward models are \texttt{Ray2333/GRM-Llama3.2-3B-rewardmodel-ft} and \texttt{RLHFlow/ArmoRM-Llama3-8B-v0.1}, which are the best performing 3B and 8B reward models according to  Rewardbench \citep{lambert2024rewardbench} in reasoning tasks.

\paragraph{Implementation Details.}
In the implementation of MoB, we always use the closed-form calculation of $\pihat_{m,N}$ discussed in Appendix~\ref{sec:app_close-bootstrap} to efficiently perform the bootstrap estimate. Therefore, in the actual implementation, there is no parameter $B$ and we effectively operate as if $B = \infty$ was chosen. We use Huggingface's Python library for all the output generations. The generation was carried on H100 GPUs. The compute cost was not tracked, but we estimate it to be on the order of a few thousand GPU-hours. We always use temperature $1$ for inference and no extra modification of the next-token sampling procedure. The final answer extraction and evaluation are calculated using the Language Model Evaluation Harness \citep{eval-harness}. For each question, we generate $512$ outputs and for each budget size $N$, we run each algorithm $\lfloor 512/N \rfloor$ times. Reported standard errors for the accuracies are calculated with the assumption of normal errors and from the standard deviation of $500 \times \lfloor 512/N \rfloor$ independent runs of the algorithm ($500$ is the dataset size across all benchmarks). The numbers reported as the improvement of MoB over BoN are based on the adaptive MoB, and its standard error is calculated from the standard deviation of paired differences of the algorithms score in $500 \times \lfloor 512/N \rfloor$ runs. We use the Scipy library \citep{2020SciPy-NMeth} in python to conduct one-sided paired t-test to decide statistical significance of the difference between the best performing algorithm with another algorithm in our tables. Algorithms with insignificant (p-value $> 0.05$) difference are also shown in bold.

For GSM8K, we use a 5-shot prompt. For MATH and MMLU-Pro questions, we use the zero-shot chain-of-thought prompting used in the official Llama3.1 models evaluation \citep{grattafiori2024llama} on MATH \citep{hendrycks2021measuring}. This prompt and the prompt used for CommonsenseQA are given in the following.

\begin{promptbox}[Prompt for MATH and MMLU-Pro]
Solve the following <topic> problem efficiently and clearly:

- For simple problems (2 steps or fewer):
Provide a concise solution with minimal explanation.

- For complex problems (3 steps or more):
Use this step-by-step format:

\verb|##| Step 1: [Concise description]
[Brief explanation and calculations]

\verb|##| Step 2: [Concise description]
[Brief explanation and calculations]

...

Regardless of the approach, always conclude with:

Therefore, the final answer is: \verb|$\\boxed{answer}$.| I hope it is correct.

Where [answer] is just the final number or expression that solves the problem.

Problem: <problem from dataset>
\end{promptbox}

\begin{promptbox}[Prompt for CommonsenseQA]
Use commonsense to solve the following multiple choice question. First explain your solution and then give the final answer. Always finish your answer with "the answer is (X)" where X is the correct letter choice.
Question:: <problem from dataset>
\end{promptbox}

\subsection{Details of Other Experiments}
In Figure~\ref{fig:bon_dist}, we discussed the success probability of BoN, which requires an estimate of BoN's output distribution. We use the same technique as in MoB to estimate this output distribution. To minimize the error of this approximation, we specifically generate 1,400 outputs for the math problems in MMLU-Pro with \QwenG. Then, we use $\pihat_{N,1400}$, as defined in Section~\ref{sec:mob-bootstrapping} as an estimate for $\pi_N$. Same technique is used in Figure~\ref{fig:oracle_mob} where the mode of $\pihat_{N,1400}$ is chosen as the output of oracle MoB, and Figure~\ref{fig:mob_vs_bonsc} to where the distribution estimation error is calculated with respect to $\pihat_{m,1400}$ instead of the true $\pi_m$.

In Figure~\ref{fig:mob-vs-optimal}, we consider seven fixed schedules for $m$, specifically $m = \lfloor N^\alpha \rfloor$ for $\alpha = 0.2,0.3,0.4,0.5,0.6,0.7,0.8$. At any budget $N$, we compared the accuracy of MoB with adaptive $m$ against the highest accuracy among the seven instantiations of fixed schedule MoB. 

In Figure~\ref{fig:insights}, for each question, we measure base model's success probability and reward model's accuracy using $512$ outputs. We ignore questions with all-correct or all-incorrect outputs, since the reward accuracy is not defined for them, as well as questions with reward accuracy bellow $0.25$ due to all algorithms having zero success probability on them. Also, $m$ is calculated adaptively for each question.

\section{Additional Experimental Results} 
\label{sec:app_extra_exps}
In this section, we provide additional experimental results for all 30 setups.

\subsection{Adaptive Subset Size Selection}
\label{sec:appendix_m_selection}
In Section~\ref{sec:method}, we compared MoB with adaptive choice of $m$ and $m=\sqrt{N}$ with the optimal choice of $m$. We provide this comparison in MATH500 (Figure~\ref{fig:m_selection_fig0}), MMLU-Pro-Math (Figure~\ref{fig:m_selection_fig1}), MMLU-Pro-Chem (Figure~\ref{fig:m_selection_fig2}), GSM8K (Figure~\ref{fig:m_selection_fig3}), and CommonsenseQA (Figure~\ref{fig:m_selection_fig4}). In Table~\ref{tab:q_performance}, we compare the performance of MoB with adaptive $q$ for various values of $q$ on all benchmarks Llama3.1-8B base model and ArmoRM reward model. As also observed in the literature, we observe that the choice of $q$ is not a sensitive one.

\begin{table}[h]
\centering
\caption{Performance of MoB with adaptive $m$ across different choices of $q$ for Llama3.1-8B base model and ArmoRM reward model.}
\begin{tabular}{lcccccc}
\hline
$q$ & $0.40$ & $0.50$ & $0.60$ & $0.70$ & $0.80$ & $0.90$ \\
\hline
\text{MATH500}      & 61.85\% & 63.00\% & 63.15\% & 62.20\% & 62.45\% & 60.70\% \\
\text{MMLU-Pro-Math} & 66.60\% & 66.85\% & 66.75\% & 66.60\% & 67.10\% & 66.35\% \\
\text{MMLU-Pro-Chem} & 56.75\% & 57.40\% & 57.85\% & 57.15\% & 56.95\% & 56.35\% \\
\text{GSM8k}         & 91.60\% & 91.55\% & 91.55\% & 91.85\% & 91.80\% & 91.85\% \\
\text{CSQA}          & 77.45\% & 77.40\% & 77.30\% & 77.25\% & 77.35\% & 77.30\% \\
\hline
\end{tabular}
\label{tab:q_performance}
\end{table}

\begin{figure}[h]
    \includegraphics[width=1\linewidth]{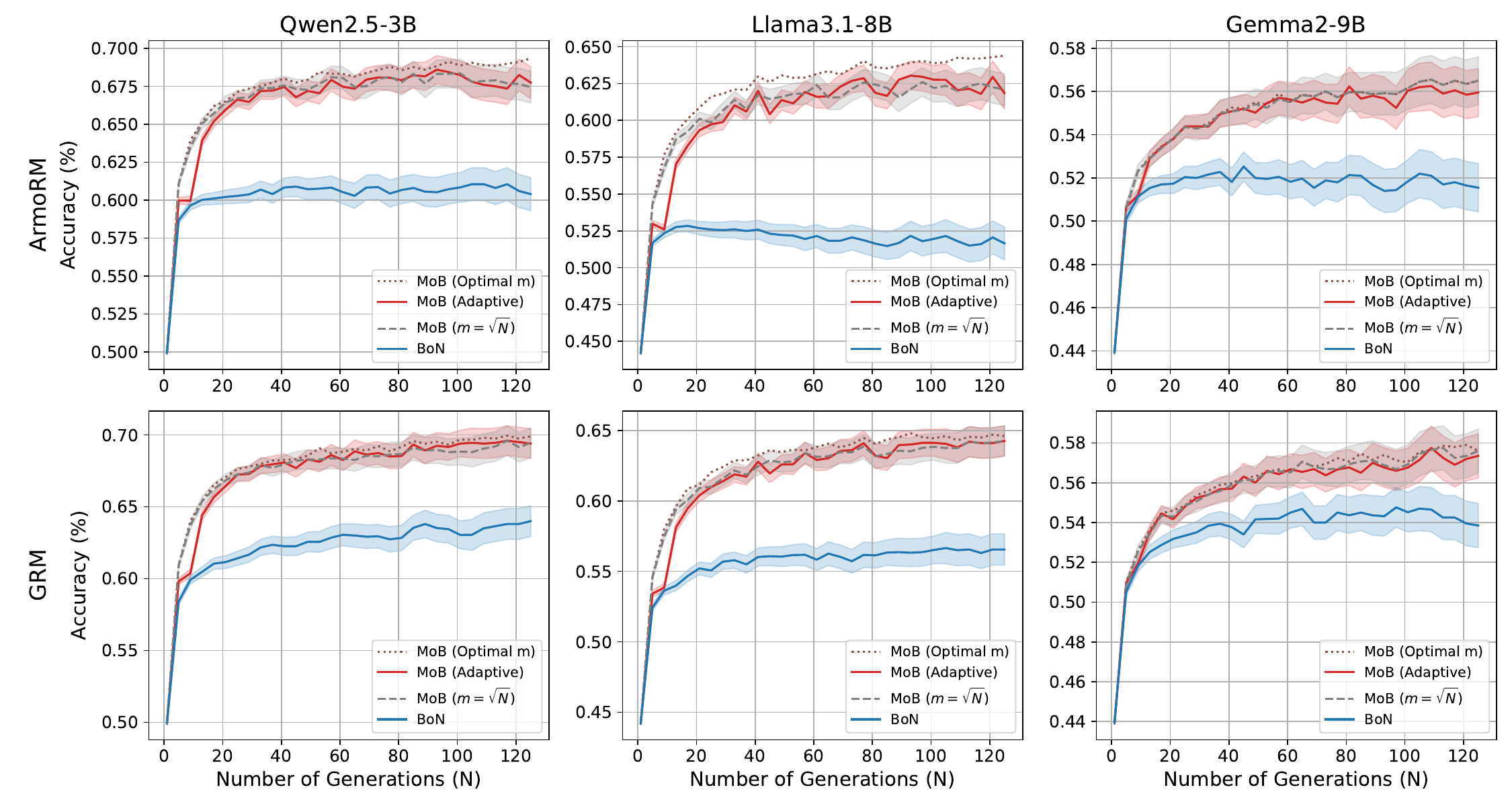}
    \caption{Comparison of MoB with adaptive $m$ and $m=\sqrt{N}$  against  MoB with optimal $m$ on the MATH500 dataset with ArmoRM \textit{(Up)} and GRM \textit{(Down)} reward models, and Qwen2.5-3B \textit{(Left)}, Llama3.1-8B \textit{(Middle)}, and Gemma2-9B \textit{(Right)} base models. Shaded areas show standard error.}
    \label{fig:m_selection_fig0}
\end{figure}

\begin{figure}[h]
    \includegraphics[width=1\linewidth]{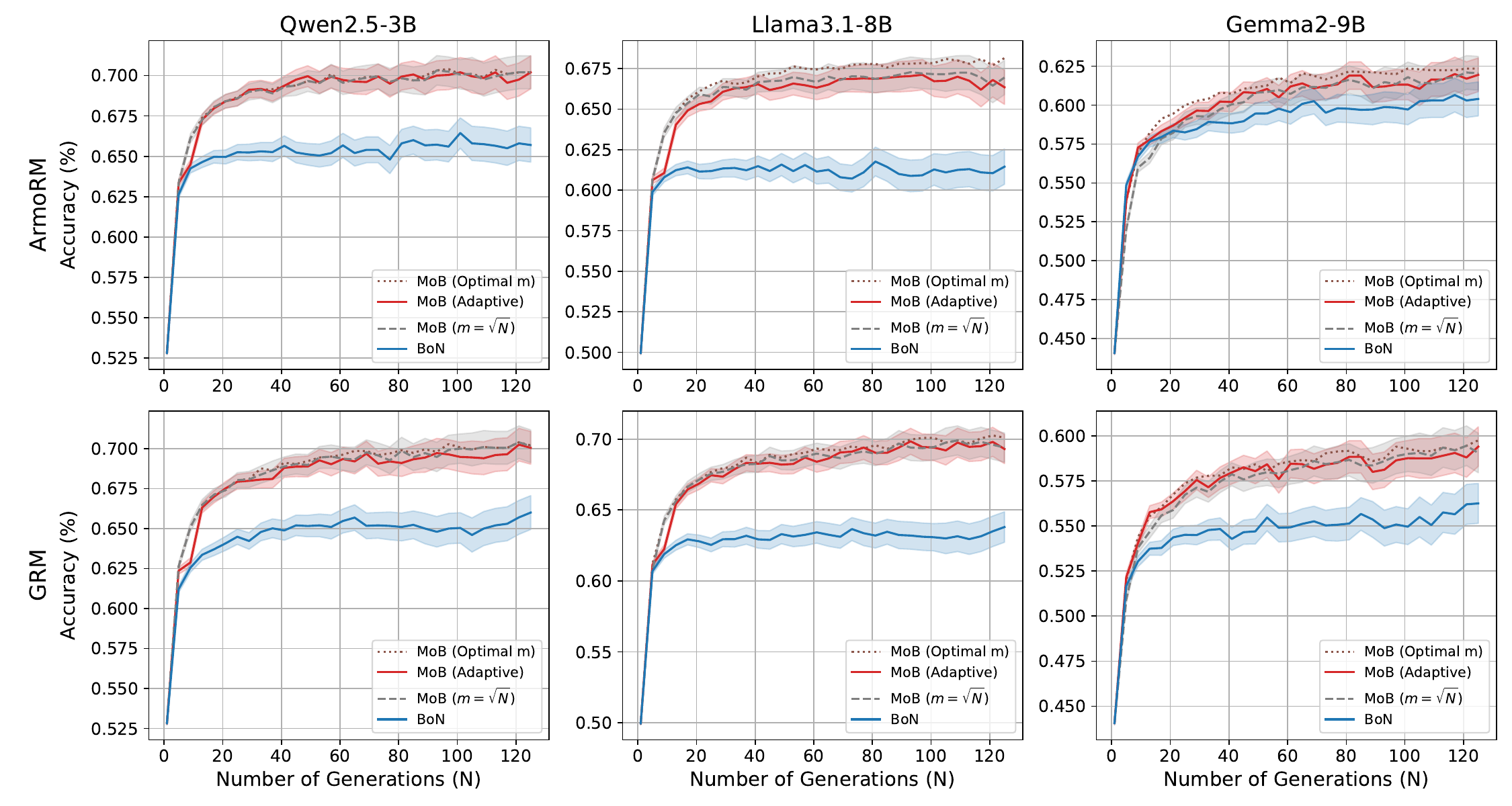}
    \caption{Comparison of MoB with adaptive $m$ and $m=\sqrt{N}$ against MoB with optimal $m$ on the MMLU-Pro-Math dataset with ArmoRM \textit{(Up)} and GRM \textit{(Down)} reward models, and Qwen2.5-3B \textit{(Left)}, Llama3.1-8B \textit{(Middle)}, and Gemma2-9B \textit{(Right)} base models. Shaded areas show standard error.}
    \label{fig:m_selection_fig1}
\end{figure}

\begin{figure}[h]
    \includegraphics[width=1\linewidth]{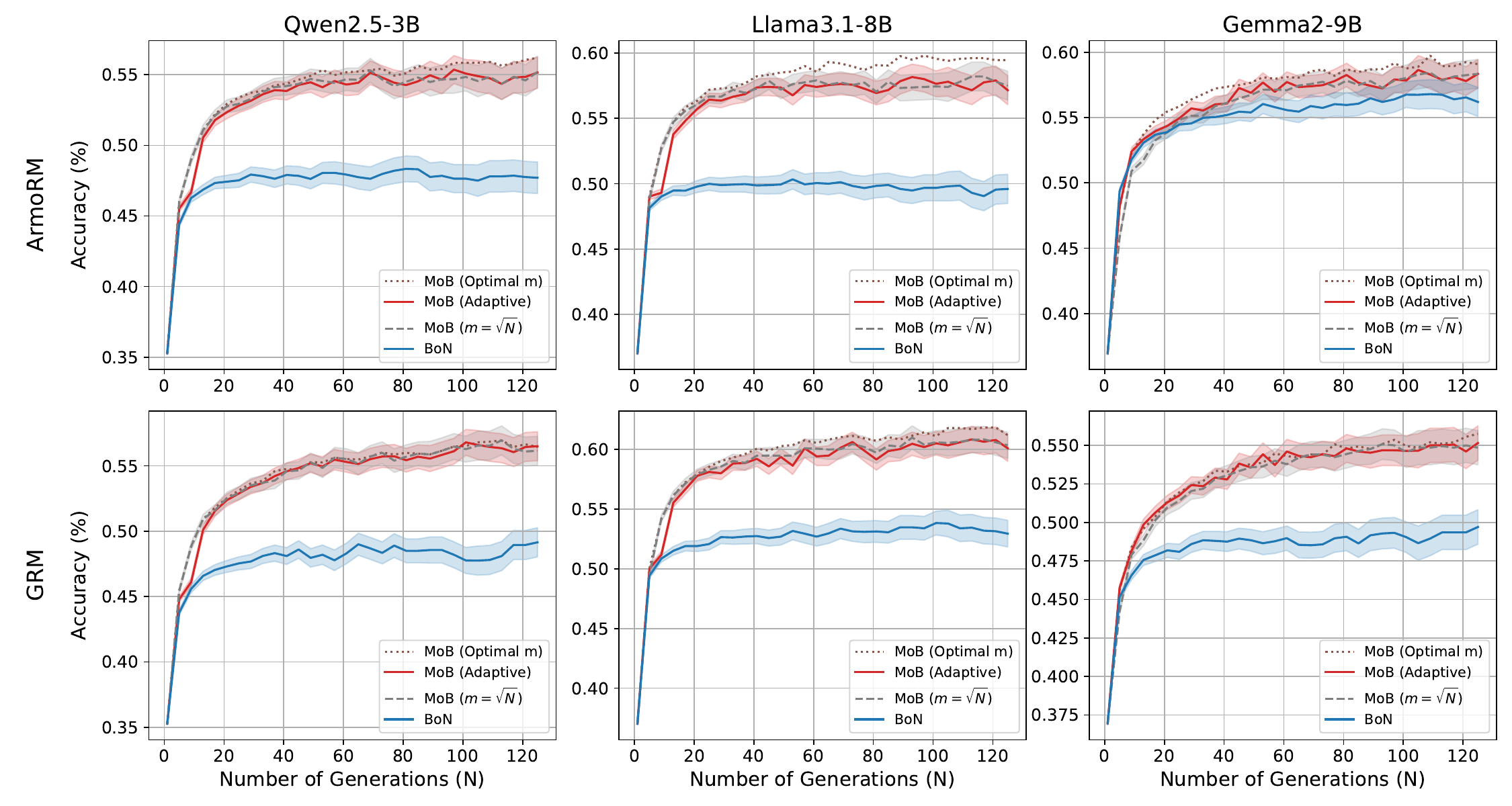}
    \caption{Comparison of MoB with adaptive $m$ and $m=\sqrt{N}$ against MoB with optimal $m$ on the MMLU-Pro-Chem dataset with ArmoRM \textit{(Up)} and GRM \textit{(Down)} reward models, and Qwen2.5-3B \textit{(Left)}, Llama3.1-8B \textit{(Middle)}, and Gemma2-9B \textit{(Right)} base models. Shaded areas show standard error.}
    \label{fig:m_selection_fig2}
\end{figure}

\begin{figure}[h]
    \includegraphics[width=1\linewidth]{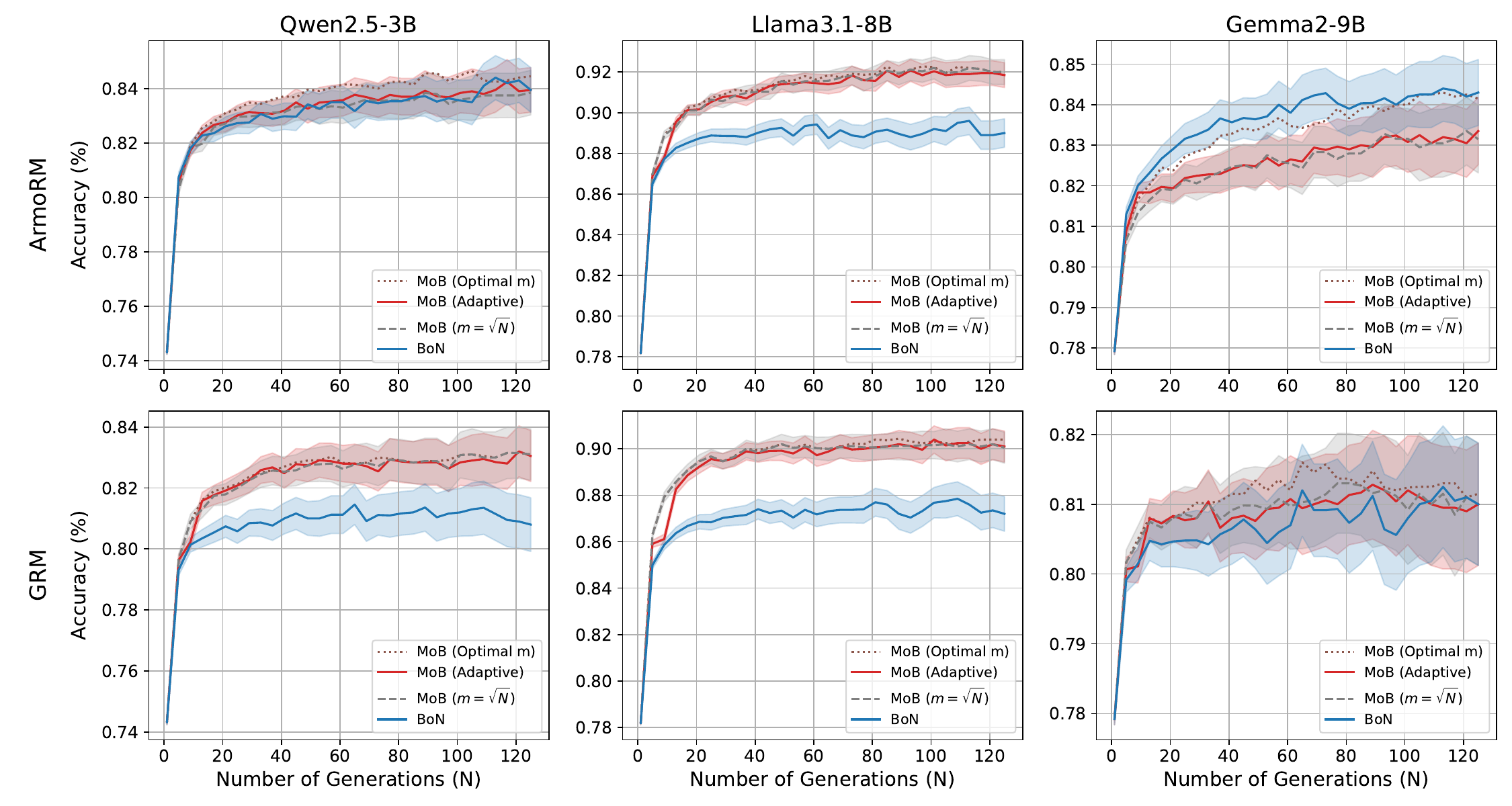}
    \caption{Comparison of MoB with adaptive $m$ and $m=\sqrt{N}$ against MoB with optimal $m$ on the GSM8K dataset with ArmoRM \textit{(Up)} and GRM \textit{(Down)} reward models, and Qwen2.5-3B \textit{(Left)}, Llama3.1-8B \textit{(Middle)}, and Gemma2-9B \textit{(Right)} base models. Shaded areas show standard error.}
    \label{fig:m_selection_fig3}
\end{figure}

\begin{figure}[h]
    \includegraphics[width=1\linewidth]{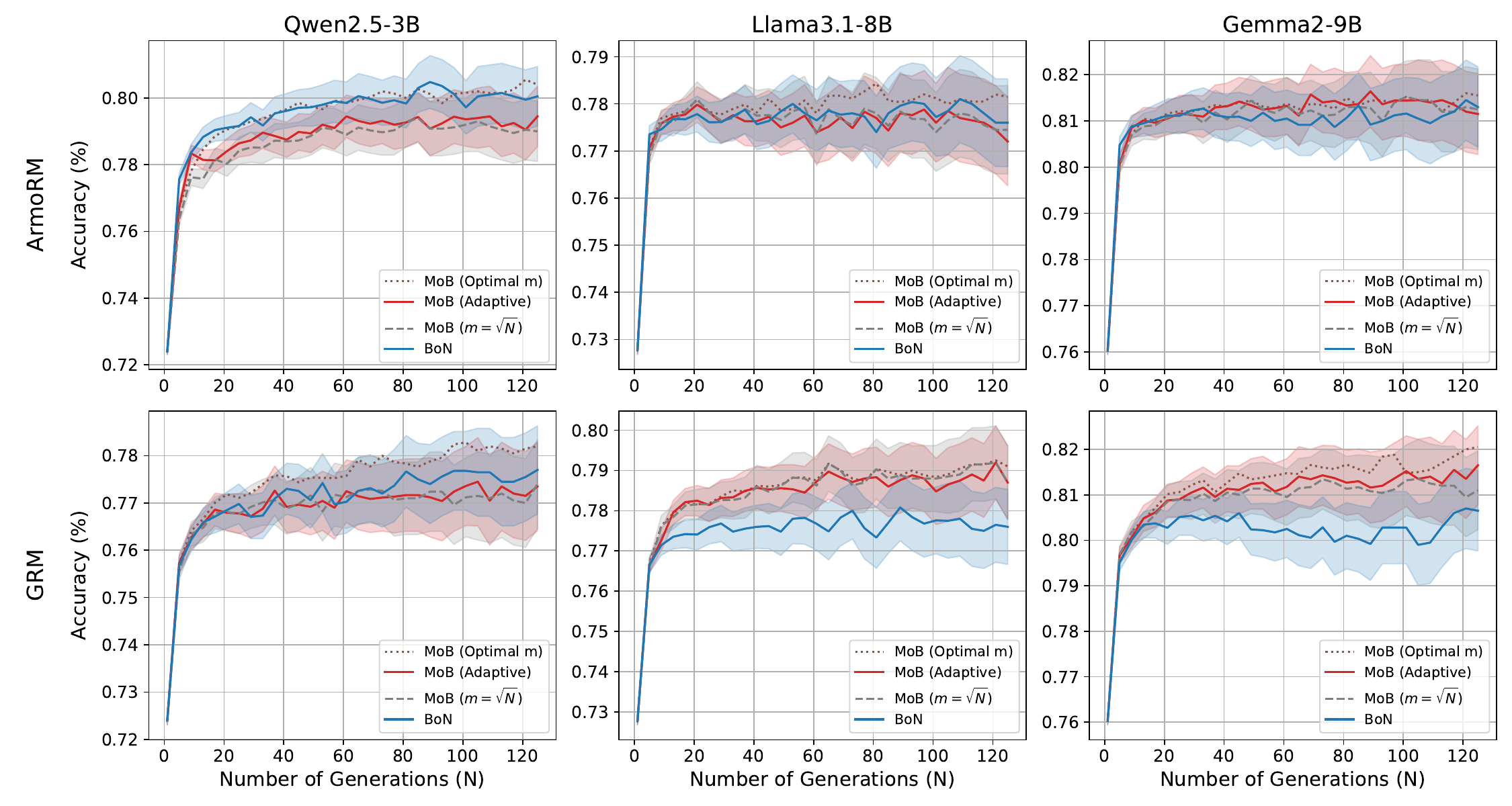}
    \caption{Comparison of MoB with adaptive $m$ and $m=\sqrt{N}$ against MoB with optimal $m$ on the CommonsenseQA dataset with ArmoRM \textit{(Up)} and GRM \textit{(Down)} reward models, and Qwen2.5-3B \textit{(Middle)}, and Gemma2-9B \textit{(Right)} base models. Shaded areas show standard error.}
    \label{fig:m_selection_fig4}
\end{figure}

\subsection{Evaluation Experiments}
\label{sec:appendix_extra_exps}
We compare MoB with adaptive $m$ and $m=\sqrt{N}$ with baselines in MATH500 (Figure~\ref{fig:acc_fig0}, Table~\ref{tab:acc-table-math500}), MMLU-Pro-Math (Figure~\ref{fig:acc_fig1}, Table~\ref{tab:acc-table-mmlu-pro-math}), MMLU-Pro-Chem (Figure~\ref{fig:acc_fig2}, Table~\ref{tab:acc-table-mmlu-pro-chem}), GSM8K (Figure~\ref{fig:acc_fig3}, Table~\ref{tab:acc-table-gsm8k}), and CommonsenseQA (Figure~\ref{fig:acc_fig4}, Table~\ref{tab:acc-table-csqa}).

\begin{table}[h]
\centering
\caption{Results on MATH500 across all base and reward models ($N = 128$).}
\Large
\renewcommand{\arraystretch}{1.3}
\begin{adjustbox}{max width=\textwidth}
\begin{tabular}{l!{\vrule width 1pt} ccc|ccc}
\toprule
& \multicolumn{3}{c|}{\textbf{ArmoRM}} & \multicolumn{3}{c}{\textbf{GRM}}\\
& Llama3.1-8B & Gemma2-9B & Qwen2.5-3B & Llama3.1-8B & Gemma2-9B & Qwen2.5-3B \\ 
\midrule
BoN &  51.55{\footnotesize $\pm$1.12}  &  52.20{\footnotesize $\pm$1.12}  &  60.60{\footnotesize $\pm$1.09}  &  56.65{\footnotesize $\pm$1.11}  &  54.95{\footnotesize $\pm$1.11}  &  63.95{\footnotesize $\pm$1.07}  \\ 
SC &  60.65{\footnotesize $\pm$1.09}  &  52.90{\footnotesize $\pm$1.12}  &  66.40{\footnotesize $\pm$1.06}  &  60.65{\footnotesize $\pm$1.09}  &  52.90{\footnotesize $\pm$1.12}  &  66.40{\footnotesize $\pm$1.06}  \\ 
WBoN &  \textbf{62.90{\footnotesize $\pm$1.08}} &  53.85{\footnotesize $\pm$1.11}  &  67.10{\footnotesize $\pm$1.05}  &  \textbf{63.55{\footnotesize $\pm$1.08}} &  56.15{\footnotesize $\pm$1.11}  &  67.45{\footnotesize $\pm$1.05}  \\ 
MoB-Adaptive (Ours) &  \textbf{62.90{\footnotesize $\pm$1.08}} &  56.15{\footnotesize $\pm$1.11}  &  \textbf{68.50{\footnotesize $\pm$1.04}} &  \textbf{64.30{\footnotesize $\pm$1.07}} &  57.45{\footnotesize $\pm$1.11}  &  \textbf{69.95{\footnotesize $\pm$1.03}} \\ 
MoB-Poly (Ours) &  \textbf{62.40{\footnotesize $\pm$1.08}} &  \textbf{57.05{\footnotesize $\pm$1.11}} &  67.85{\footnotesize $\pm$1.04}  &  \textbf{64.00{\footnotesize $\pm$1.07}} &  \textbf{58.10{\footnotesize $\pm$1.10}} &  \textbf{69.45{\footnotesize $\pm$1.03}} \\ 
\midrule
$\uparrow$MoB over BoN &  \underline{11.35{\footnotesize $\pm$0.86}} &  \underline{3.95{\footnotesize $\pm$0.68}} &  \underline{7.90{\footnotesize $\pm$0.78}} &  \underline{7.65{\footnotesize $\pm$0.80}} &  \underline{2.50{\footnotesize $\pm$0.64}} &  \underline{6.00{\footnotesize $\pm$0.78}} \\ 
\end{tabular}
\end{adjustbox}
\label{tab:acc-table-math500}
\end{table}

\begin{table}[h]
\centering
\caption{Results on MMLU-Pro-Math across all base and reward models ($N = 128$).}
\Large
\renewcommand{\arraystretch}{1.3}
\begin{adjustbox}{max width=\textwidth}
\begin{tabular}{l!{\vrule width 1pt} ccc|ccc}
\toprule
& \multicolumn{3}{c|}{\textbf{ArmoRM}} & \multicolumn{3}{c}{\textbf{GRM}}\\
& Llama3.1-8B & Gemma2-9B & Qwen2.5-3B & Llama3.1-8B & Gemma2-9B & Qwen2.5-3B \\ 
\midrule
BoN &  61.40{\footnotesize $\pm$1.09}  &  60.45{\footnotesize $\pm$1.09}  &  65.95{\footnotesize $\pm$1.06}  &  64.10{\footnotesize $\pm$1.07}  &  56.15{\footnotesize $\pm$1.11}  &  66.10{\footnotesize $\pm$1.06}  \\ 
SC &  62.95{\footnotesize $\pm$1.08}  &  49.95{\footnotesize $\pm$1.12}  &  65.60{\footnotesize $\pm$1.06}  &  62.95{\footnotesize $\pm$1.08}  &  49.95{\footnotesize $\pm$1.12}  &  65.60{\footnotesize $\pm$1.06}  \\ 
WBoN &  \textbf{66.45{\footnotesize $\pm$1.06}} &  52.25{\footnotesize $\pm$1.12}  &  66.70{\footnotesize $\pm$1.05}  &  60.05{\footnotesize $\pm$1.10}  &  56.45{\footnotesize $\pm$1.11}  &  64.35{\footnotesize $\pm$1.07}  \\ 
MoB-Adaptive (Ours) &  \textbf{66.70{\footnotesize $\pm$1.05}} &  \textbf{61.55{\footnotesize $\pm$1.09}} &  \textbf{69.80{\footnotesize $\pm$1.03}} &  \textbf{69.05{\footnotesize $\pm$1.03}} &  \textbf{59.35{\footnotesize $\pm$1.10}} &  69.30{\footnotesize $\pm$1.03}  \\ 
MoB-Poly (Ours) &  \textbf{67.20{\footnotesize $\pm$1.05}} &  \textbf{62.05{\footnotesize $\pm$1.09}} &  \textbf{70.05{\footnotesize $\pm$1.02}} &  \textbf{69.30{\footnotesize $\pm$1.03}} &  \textbf{59.45{\footnotesize $\pm$1.10}} &  \textbf{70.15{\footnotesize $\pm$1.02}} \\ 
\midrule
$\uparrow$MoB over BoN &  \underline{5.30{\footnotesize $\pm$0.81}} &  \underline{1.10{\footnotesize $\pm$0.71}} &  \underline{3.85{\footnotesize $\pm$0.80}} &  \underline{4.95{\footnotesize $\pm$0.82}} &  \underline{3.20{\footnotesize $\pm$0.83}} &  \underline{3.20{\footnotesize $\pm$0.79}} \\ 
\end{tabular}
\end{adjustbox}
\label{tab:acc-table-mmlu-pro-math}
\end{table}

\begin{table}[h]
\centering
\caption{Results on MMLU-Pro-Chem across all base and reward models ($N = 128$).}
\Large
\renewcommand{\arraystretch}{1.3}
\begin{adjustbox}{max width=\textwidth}
\begin{tabular}{l!{\vrule width 1pt} ccc|ccc}
\toprule
& \multicolumn{3}{c|}{\textbf{ArmoRM}} & \multicolumn{3}{c}{\textbf{GRM}}\\
& Llama3.1-8B & Gemma2-9B & Qwen2.5-3B & Llama3.1-8B & Gemma2-9B & Qwen2.5-3B \\ 
\midrule
BoN &  49.70{\footnotesize $\pm$1.12}  &  56.60{\footnotesize $\pm$1.11}  &  48.05{\footnotesize $\pm$1.12}  &  53.05{\footnotesize $\pm$1.12}  &  49.25{\footnotesize $\pm$1.12}  &  49.00{\footnotesize $\pm$1.12}  \\ 
SC &  50.25{\footnotesize $\pm$1.12}  &  43.40{\footnotesize $\pm$1.11}  &  52.50{\footnotesize $\pm$1.12}  &  50.25{\footnotesize $\pm$1.12}  &  43.40{\footnotesize $\pm$1.11}  &  52.50{\footnotesize $\pm$1.12}  \\ 
WBoN &  \textbf{57.65{\footnotesize $\pm$1.10}} &  45.45{\footnotesize $\pm$1.11}  &  53.30{\footnotesize $\pm$1.12}  &  49.75{\footnotesize $\pm$1.12}  &  \textbf{57.25{\footnotesize $\pm$1.11}} &  53.10{\footnotesize $\pm$1.12}  \\ 
MoB-Adaptive (Ours) &  \textbf{57.40{\footnotesize $\pm$1.11}} &  58.05{\footnotesize $\pm$1.10}  &  \textbf{54.75{\footnotesize $\pm$1.11}} &  \textbf{60.75{\footnotesize $\pm$1.09}} &  54.60{\footnotesize $\pm$1.11}  &  \textbf{56.45{\footnotesize $\pm$1.11}} \\ 
MoB-Poly (Ours) &  \textbf{57.80{\footnotesize $\pm$1.10}} &  \textbf{58.80{\footnotesize $\pm$1.10}} &  \textbf{54.90{\footnotesize $\pm$1.11}} &  60.00{\footnotesize $\pm$1.10}  &  55.00{\footnotesize $\pm$1.11}  &  \textbf{56.30{\footnotesize $\pm$1.11}} \\ 
\midrule
$\uparrow$MoB over BoN &  \underline{7.70{\footnotesize $\pm$0.92}} &  \underline{1.45{\footnotesize $\pm$0.80}} &  \underline{6.70{\footnotesize $\pm$0.92}} &  \underline{7.70{\footnotesize $\pm$0.93}} &  \underline{5.35{\footnotesize $\pm$0.92}} &  \underline{7.45{\footnotesize $\pm$0.94}} \\ 
\end{tabular}
\end{adjustbox}
\label{tab:acc-table-mmlu-pro-chem}
\end{table}

\begin{table}[h]
\centering
\caption{Results on GSM8K across all base and reward models ($N = 128$).}
\Large
\renewcommand{\arraystretch}{1.3}
\begin{adjustbox}{max width=\textwidth}
\begin{tabular}{l!{\vrule width 1pt} ccc|ccc}
\toprule
& \multicolumn{3}{c|}{\textbf{ArmoRM}} & \multicolumn{3}{c}{\textbf{GRM}}\\
& Llama3.1-8B & Gemma2-9B & Qwen2.5-3B & Llama3.1-8B & Gemma2-9B & Qwen2.5-3B \\ 
\midrule
BoN &  89.00{\footnotesize $\pm$0.70}  &  \textbf{84.20{\footnotesize $\pm$0.82}} &  \textbf{83.85{\footnotesize $\pm$0.82}} &  87.15{\footnotesize $\pm$0.75}  &  \textbf{81.20{\footnotesize $\pm$0.87}} &  80.95{\footnotesize $\pm$0.88}  \\ 
SC &  88.15{\footnotesize $\pm$0.72}  &  80.55{\footnotesize $\pm$0.89}  &  80.40{\footnotesize $\pm$0.89}  &  88.15{\footnotesize $\pm$0.72}  &  \textbf{80.55{\footnotesize $\pm$0.89}} &  80.40{\footnotesize $\pm$0.89}  \\ 
WBoN &  88.70{\footnotesize $\pm$0.71}  &  80.75{\footnotesize $\pm$0.88}  &  81.10{\footnotesize $\pm$0.88}  &  77.75{\footnotesize $\pm$0.93}  &  79.45{\footnotesize $\pm$0.90}  &  81.25{\footnotesize $\pm$0.87}  \\ 
MoB-Adaptive (Ours) &  \textbf{91.75{\footnotesize $\pm$0.62}} &  83.30{\footnotesize $\pm$0.83}  &  \textbf{83.85{\footnotesize $\pm$0.82}} &  \textbf{90.50{\footnotesize $\pm$0.66}} &  \textbf{81.15{\footnotesize $\pm$0.87}} &  \textbf{82.85{\footnotesize $\pm$0.84}} \\ 
MoB-Poly (Ours) &  \textbf{91.80{\footnotesize $\pm$0.61}} &  83.15{\footnotesize $\pm$0.84}  &  \textbf{83.80{\footnotesize $\pm$0.82}} &  90.05{\footnotesize $\pm$0.67}  &  \textbf{80.85{\footnotesize $\pm$0.88}} &  \textbf{83.10{\footnotesize $\pm$0.84}} \\ 
\midrule
$\uparrow$MoB over BoN &  \underline{2.75{\footnotesize $\pm$0.56}} &  \underline{-0.90{\footnotesize $\pm$0.52}} &  \underline{0.00{\footnotesize $\pm$0.50}} &  \underline{3.35{\footnotesize $\pm$0.56}} &  \underline{-0.05{\footnotesize $\pm$0.47}} &  \underline{1.90{\footnotesize $\pm$0.51}} \\ 
\end{tabular}
\end{adjustbox}
\label{tab:acc-table-gsm8k}
\end{table}

\begin{table}[h]
\centering
\caption{Results on CSQA across all base and reward models ($N = 128$).}
\Large
\renewcommand{\arraystretch}{1.3}
\begin{adjustbox}{max width=\textwidth}
\begin{tabular}{l!{\vrule width 1pt} ccc|ccc}
\toprule
& \multicolumn{3}{c|}{\textbf{ArmoRM}} & \multicolumn{3}{c}{\textbf{GRM}}\\
& Llama3.1-8B & Gemma2-9B & Qwen2.5-3B & Llama3.1-8B & Gemma2-9B & Qwen2.5-3B \\ 
\midrule
BoN &  \textbf{77.80{\footnotesize $\pm$0.93}} &  \textbf{81.20{\footnotesize $\pm$0.87}} &  \textbf{80.15{\footnotesize $\pm$0.89}} &  \textbf{78.05{\footnotesize $\pm$0.93}} &  80.55{\footnotesize $\pm$0.89}  &  \textbf{77.70{\footnotesize $\pm$0.93}} \\ 
SC &  75.75{\footnotesize $\pm$0.96}  &  79.25{\footnotesize $\pm$0.91}  &  76.20{\footnotesize $\pm$0.95}  &  75.75{\footnotesize $\pm$0.96}  &  79.25{\footnotesize $\pm$0.91}  &  76.20{\footnotesize $\pm$0.95}  \\ 
WBoN &  76.75{\footnotesize $\pm$0.94}  &  80.05{\footnotesize $\pm$0.89}  &  76.60{\footnotesize $\pm$0.95}  &  36.35{\footnotesize $\pm$1.08}  &  49.80{\footnotesize $\pm$1.12}  &  54.90{\footnotesize $\pm$1.11}  \\ 
MoB-Adaptive (Ours) &  \textbf{77.40{\footnotesize $\pm$0.94}} &  \textbf{81.20{\footnotesize $\pm$0.87}} &  \textbf{79.40{\footnotesize $\pm$0.90}} &  \textbf{78.45{\footnotesize $\pm$0.92}} &  \textbf{81.45{\footnotesize $\pm$0.87}} &  \textbf{77.40{\footnotesize $\pm$0.94}} \\ 
MoB-Poly (Ours) &  \textbf{77.30{\footnotesize $\pm$0.94}} &  \textbf{81.45{\footnotesize $\pm$0.87}} &  79.15{\footnotesize $\pm$0.91}  &  \textbf{78.65{\footnotesize $\pm$0.92}} &  \textbf{81.15{\footnotesize $\pm$0.87}} &  \textbf{77.45{\footnotesize $\pm$0.93}} \\ 
\midrule
$\uparrow$MoB over BoN &  \underline{-0.40{\footnotesize $\pm$0.47}} &  \underline{0.00{\footnotesize $\pm$0.43}} &  \underline{-0.75{\footnotesize $\pm$0.48}} &  \underline{0.40{\footnotesize $\pm$0.54}} &  \underline{0.90{\footnotesize $\pm$0.48}} &  \underline{-0.30{\footnotesize $\pm$0.52}} \\ 
\end{tabular}
\end{adjustbox}
\label{tab:acc-table-csqa}
\end{table}

\begin{figure}[h]
    \includegraphics[width=1\linewidth]{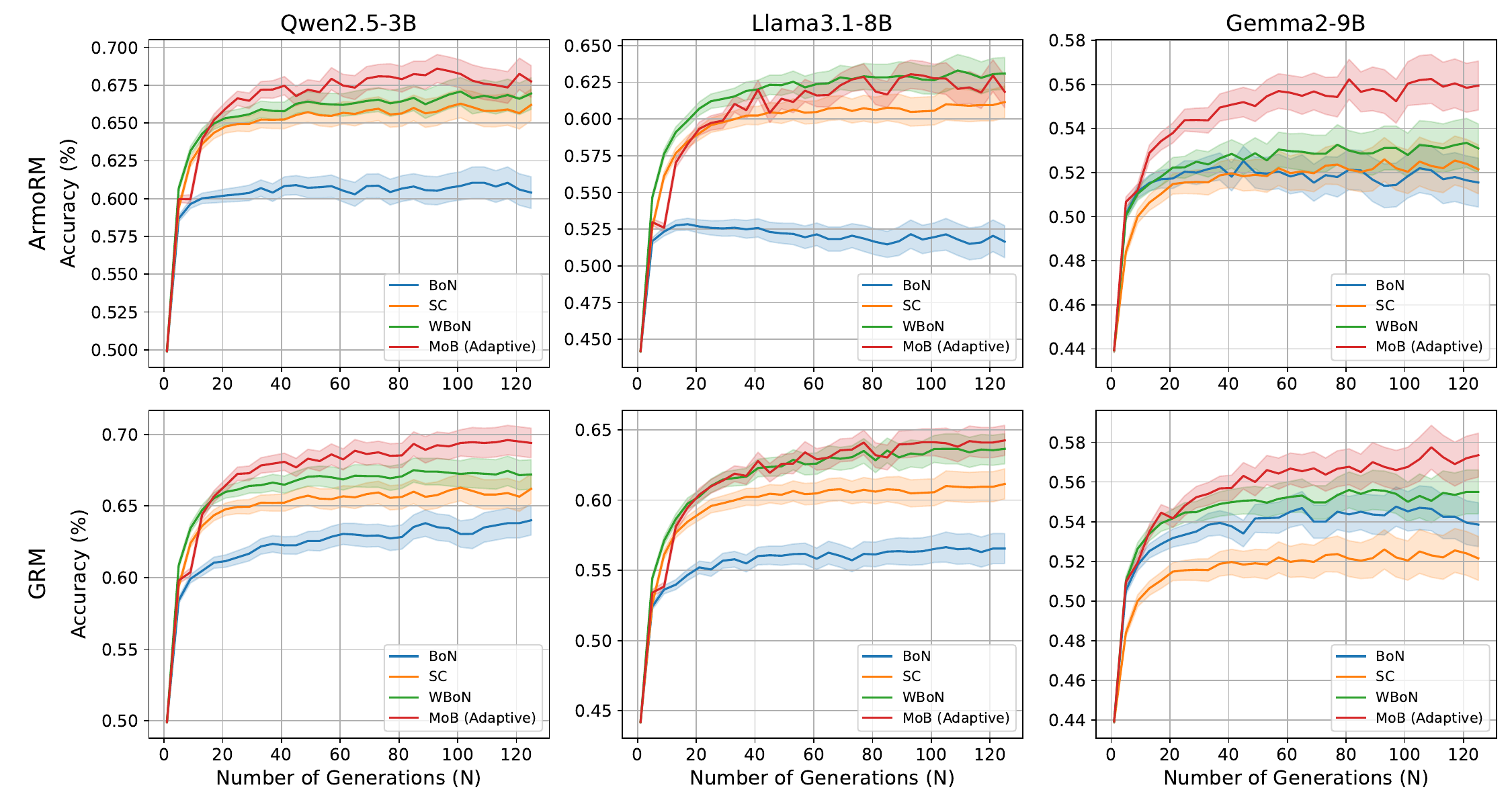}
    \caption{Comparison of MoB with the baselines on the MATH500 dataset with ArmoRM \textit{(Up)} and GRM \textit{(Down)} reward models, and Qwen2.5-3B \textit{(Left)}, Llama3.1-8B \textit{(Middle)}, and Gemma2-9B \textit{(Right)} base models. Shaded areas show standard error.}
    \label{fig:acc_fig0}
\end{figure}

\begin{figure}[h]
    \includegraphics[width=1\linewidth]{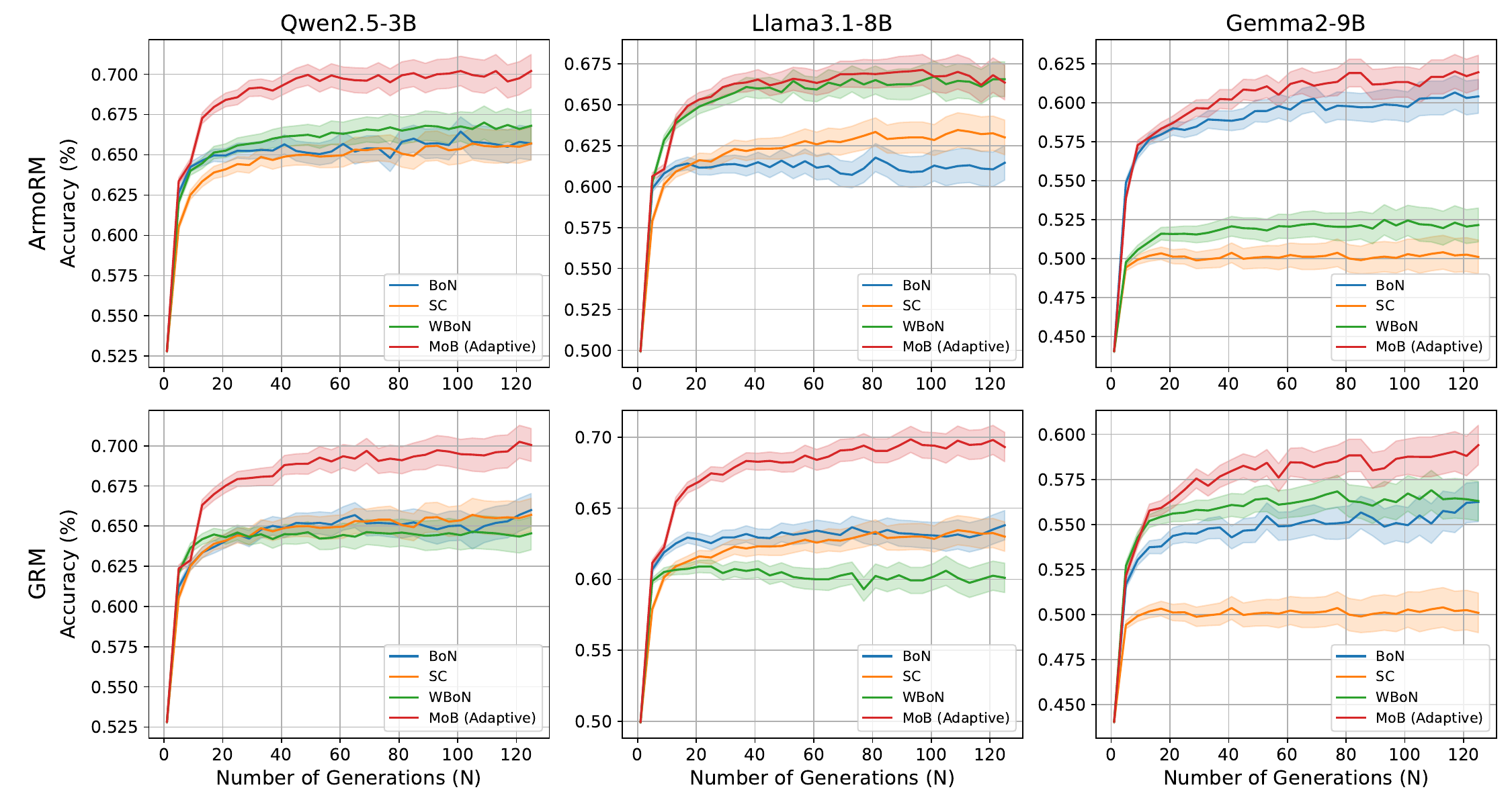}
    \caption{Comparison of MoB with the baselines on the MMLU-Pro-Math dataset with ArmoRM \textit{(Up)} and GRM \textit{(Down)} reward models, and Qwen2.5-3B \textit{(Left)}, Llama3.1-8B \textit{(Middle)}, and Gemma2-9B \textit{(Right)} base models. Shaded areas show standard error.}
    \label{fig:acc_fig1}
\end{figure}

\begin{figure}[h]
    \includegraphics[width=1\linewidth]{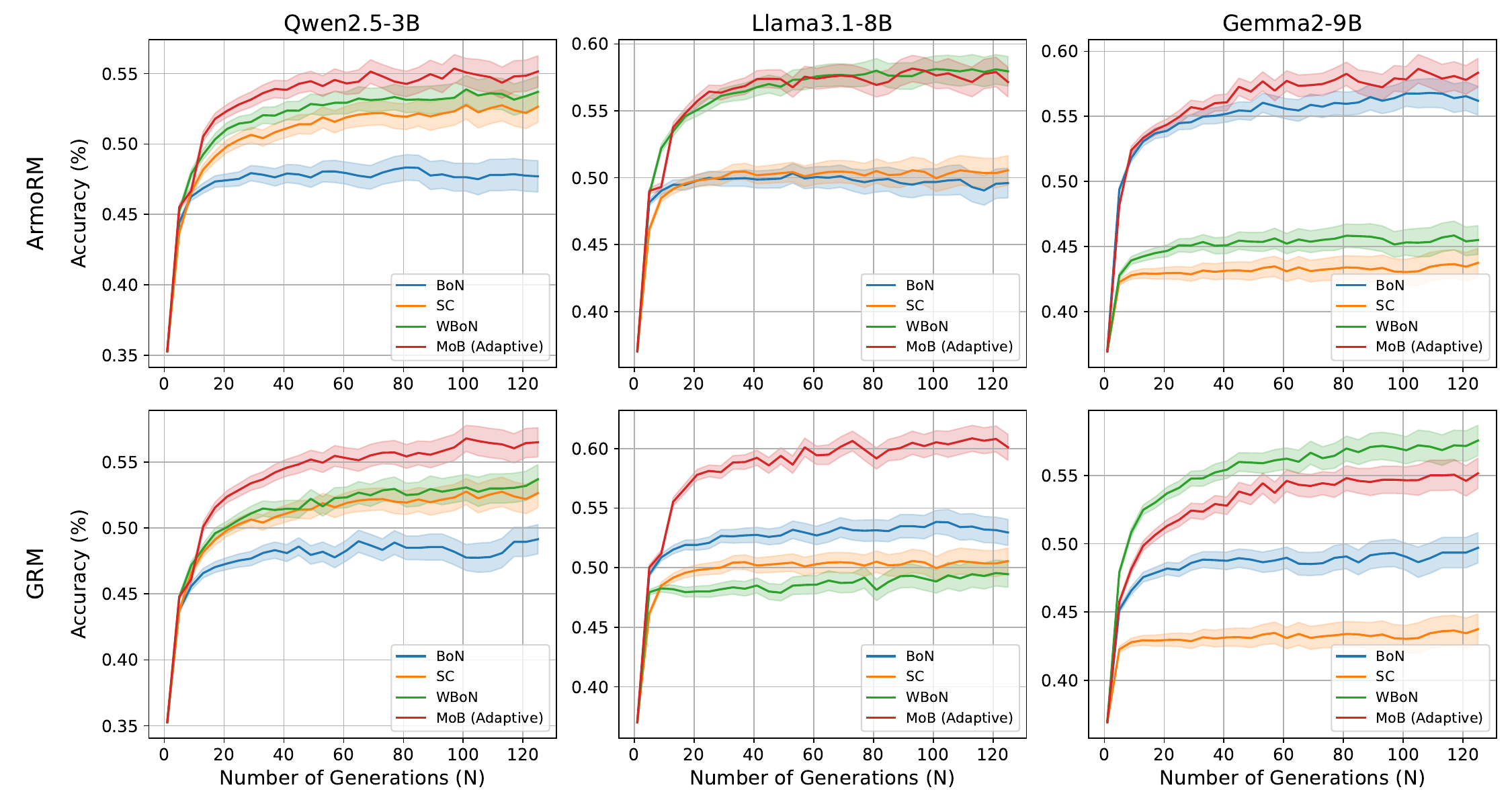}
    \caption{Comparison of MoB with the baselines on the MMLU-Pro-Chem dataset with ArmoRM \textit{(Up)} and GRM \textit{(Down)} reward models, and Qwen2.5-3B \textit{(Left)}, Llama3.1-8B \textit{(Middle)}, and Gemma2-9B \textit{(Right)} base models. Shaded areas show standard error.}
    \label{fig:acc_fig2}
\end{figure}

\begin{figure}[h]
    \includegraphics[width=1\linewidth]{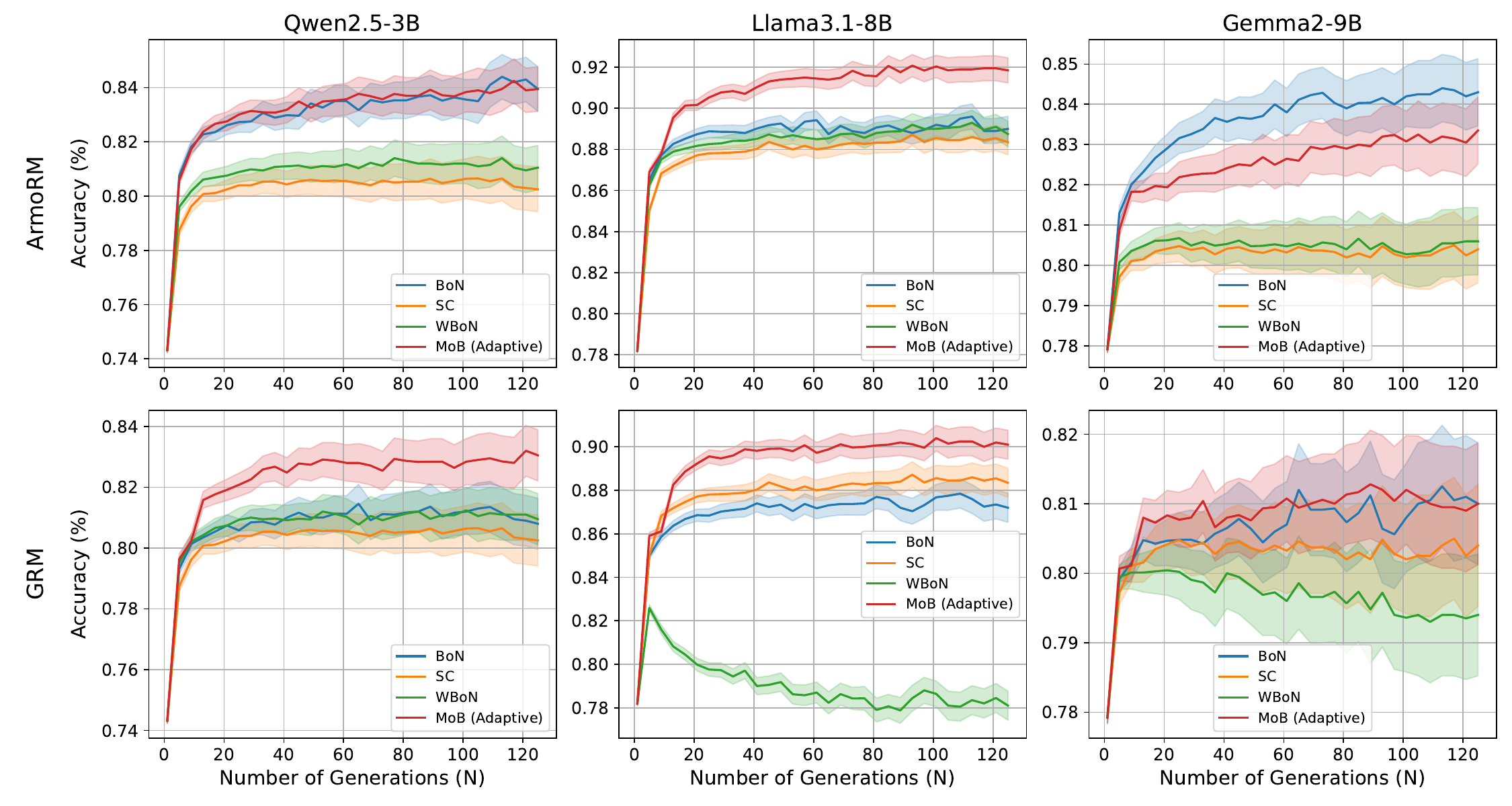}
    \caption{Comparison of MoB with the baselines on the GSM8K dataset with ArmoRM \textit{(Up)} and GRM \textit{(Down)} reward models, and Qwen2.5-3B \textit{(Left)}, Llama3.1-8B \textit{(Middle)}, and Gemma2-9B \textit{(Right)} base models. Shaded areas show standard error.}
    \label{fig:acc_fig3}
\end{figure}

\begin{figure}[h]
    \includegraphics[width=1\linewidth]{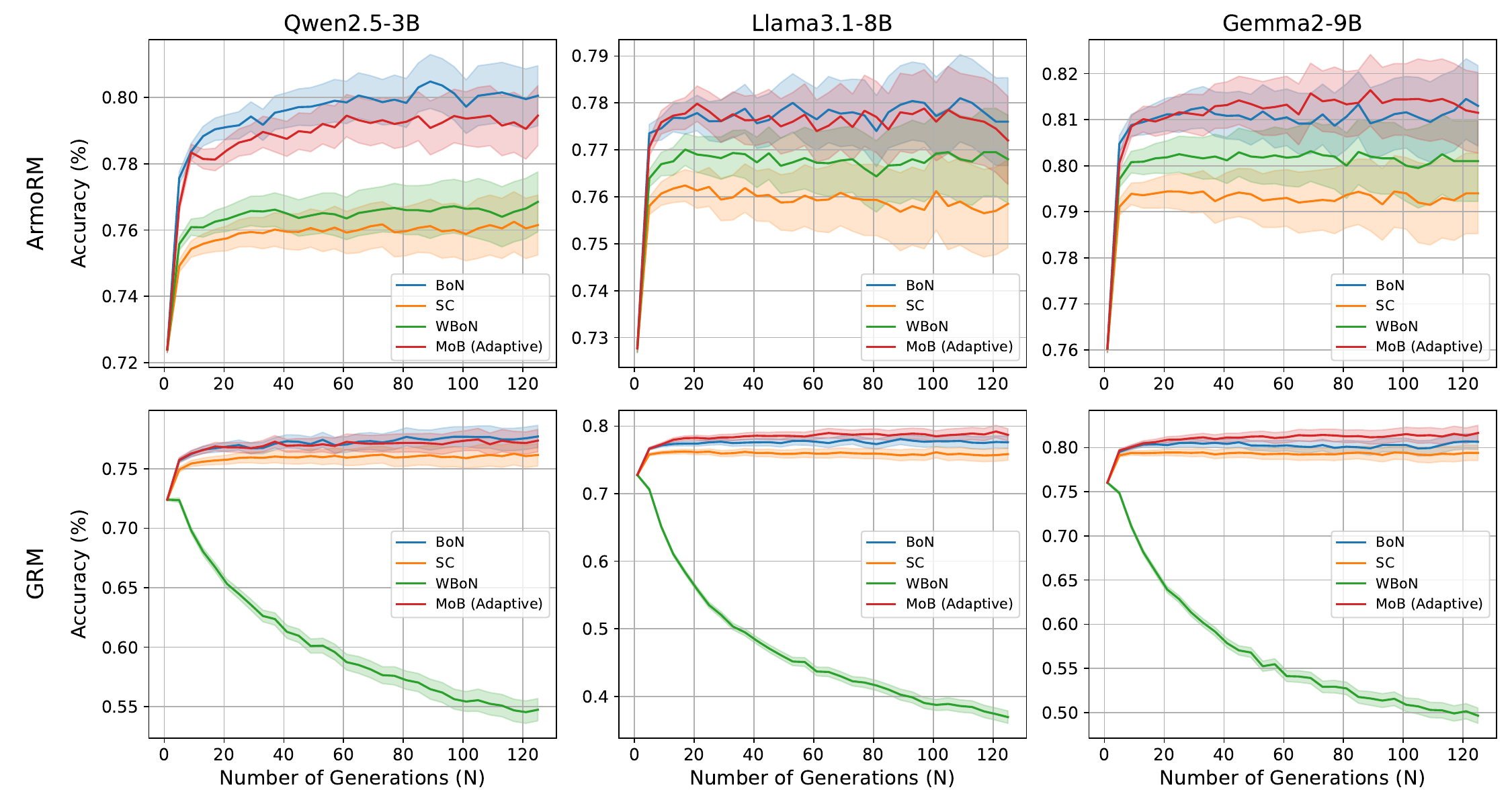}
    \caption{Comparison of MoB with the baselines on the CommonsenseQA dataset with ArmoRM \textit{(Up)} and GRM \textit{(Down)} reward models, and Qwen2.5-3B \textit{(Left)}, Llama3.1-8B \textit{(Middle)}, and Gemma2-9B \textit{(Right)} base models. Shaded areas show standard error.}
    \label{fig:acc_fig4}
\end{figure}

\subsection{Results on Skywork-v2 Reward Model}
Additionally, we report the results for \texttt{Skywork/Skywork-Reward-V2-Llama-3.1-8B} \citep{liu2025skywork}, a more recent reward model in Tables~\ref{tab:acc-table-skywork-math500}-\ref{tab:acc-table-skywork-csqa}.

\begin{table}[h]
\centering
\caption{Results on MATH500 with Skywork reward model across all base models ($N = 128$).}
\renewcommand{\arraystretch}{1.3}
\begin{adjustbox}{max width=\textwidth}
\begin{tabular}{l!{\vrule width 1pt} ccc}
\toprule
& \multicolumn{3}{c}{\textbf{Skywork}}\\
& Llama3.1-8B & Gemma2-9B & Qwen2.5-3B \\ 
\midrule
BoN &  54.50{\footnotesize $\pm$0.79}  &  55.85{\footnotesize $\pm$0.79}  &  63.35{\footnotesize $\pm$0.76}  \\ 
SC &  60.65{\footnotesize $\pm$1.09}  &  52.90{\footnotesize $\pm$1.12}  &  66.40{\footnotesize $\pm$1.06}  \\ 
WBoN &  \textbf{65.05{\footnotesize $\pm$1.07}} &  57.95{\footnotesize $\pm$1.10}  &  68.80{\footnotesize $\pm$1.04}  \\ 
MoB-Adaptive (Ours) &  63.95{\footnotesize $\pm$1.07}  &  \textbf{59.45{\footnotesize $\pm$1.10}} &  69.70{\footnotesize $\pm$1.03}  \\ 
MoB-Poly (Ours) &  63.65{\footnotesize $\pm$1.08}  &  \textbf{59.15{\footnotesize $\pm$1.10}} &  \textbf{70.35{\footnotesize $\pm$1.02}} \\ 
\midrule
$\uparrow$MoB over BoN &  \underline{9.45{\footnotesize $\pm$0.81}} &  \underline{3.60{\footnotesize $\pm$0.72}} &  \underline{6.35{\footnotesize $\pm$0.78}} \\ 
\end{tabular}
\end{adjustbox}
\label{tab:acc-table-skywork-math500}
\end{table}

\begin{table}[h]
\centering
\caption{Results on MMLU-Pro-Math with Skywork reward model across all base models ($N = 128$).}
\renewcommand{\arraystretch}{1.3}
\begin{adjustbox}{max width=\textwidth}
\begin{tabular}{l!{\vrule width 1pt} ccc}
\toprule
& \multicolumn{3}{c}{\textbf{Skywork}}\\
& Llama3.1-8B & Gemma2-9B & Qwen2.5-3B \\ 
\midrule
BoN &  60.10{\footnotesize $\pm$0.77}  &  54.20{\footnotesize $\pm$0.79}  &  67.80{\footnotesize $\pm$0.74}  \\ 
SC &  62.95{\footnotesize $\pm$1.08}  &  49.95{\footnotesize $\pm$1.12}  &  65.60{\footnotesize $\pm$1.06}  \\ 
WBoN &  \textbf{69.45{\footnotesize $\pm$1.03}} &  \textbf{60.00{\footnotesize $\pm$1.10}} &  69.80{\footnotesize $\pm$1.03}  \\ 
MoB-Adaptive (Ours) &  66.00{\footnotesize $\pm$1.06}  &  \textbf{59.10{\footnotesize $\pm$1.10}} &  \textbf{72.55{\footnotesize $\pm$1.00}} \\ 
MoB-Poly (Ours) &  66.70{\footnotesize $\pm$1.05}  &  \textbf{59.10{\footnotesize $\pm$1.10}} &  \textbf{72.85{\footnotesize $\pm$0.99}} \\ 
\midrule
$\uparrow$MoB over BoN &  \underline{5.90{\footnotesize $\pm$0.75}} &  \underline{4.90{\footnotesize $\pm$0.88}} &  \underline{4.75{\footnotesize $\pm$0.75}} \\ 
\end{tabular}
\end{adjustbox}
\label{tab:acc-table-skywork-mmlu-pro-math}
\end{table}

\begin{table}[h]
\centering
\caption{Results on MMLU-Pro-Chem with Skywork reward model across all base models ($N = 128$).}
\renewcommand{\arraystretch}{1.3}
\begin{adjustbox}{max width=\textwidth}
\begin{tabular}{l!{\vrule width 1pt} ccc}
\toprule
& \multicolumn{3}{c}{\textbf{Skywork}}\\
& Llama3.1-8B & Gemma2-9B & Qwen2.5-3B \\ 
\midrule
BoN &  57.23{\footnotesize $\pm$0.82}  &  53.20{\footnotesize $\pm$0.79}  &  57.70{\footnotesize $\pm$0.78}  \\ 
SC &  50.60{\footnotesize $\pm$1.17}  &  43.40{\footnotesize $\pm$1.11}  &  52.50{\footnotesize $\pm$1.12}  \\ 
WBoN &  \textbf{62.83{\footnotesize $\pm$1.13}} &  \textbf{58.60{\footnotesize $\pm$1.10}} &  58.65{\footnotesize $\pm$1.10}  \\ 
MoB-Adaptive (Ours) &  60.87{\footnotesize $\pm$1.14}  &  \textbf{57.75{\footnotesize $\pm$1.10}} &  \textbf{61.50{\footnotesize $\pm$1.09}} \\ 
MoB-Poly (Ours) &  60.92{\footnotesize $\pm$1.14}  &  \textbf{57.40{\footnotesize $\pm$1.11}} &  \textbf{61.55{\footnotesize $\pm$1.09}} \\ 
\midrule
$\uparrow$MoB over BoN &  \underline{3.64{\footnotesize $\pm$0.79}} &  \underline{4.55{\footnotesize $\pm$0.98}} &  \underline{3.80{\footnotesize $\pm$0.82}} \\ 
\end{tabular}
\end{adjustbox}
\label{tab:acc-table-skywork-mmlu-pro-chem}
\end{table}

\begin{table}[h]
\centering
\caption{Results on GSM8K with Skywork reward model across all base models ($N = 128$).}
\renewcommand{\arraystretch}{1.3}
\begin{adjustbox}{max width=\textwidth}
\begin{tabular}{l!{\vrule width 1pt} ccc}
\toprule
& \multicolumn{3}{c}{\textbf{Skywork}}\\
& Llama3.1-8B & Gemma2-9B & Qwen2.5-3B \\ 
\midrule
BoN &  85.15{\footnotesize $\pm$0.56}  &  \textbf{80.98{\footnotesize $\pm$0.62}} &  82.37{\footnotesize $\pm$1.05}  \\ 
SC &  88.15{\footnotesize $\pm$0.72}  &  80.53{\footnotesize $\pm$0.89}  &  80.85{\footnotesize $\pm$1.53}  \\ 
WBoN &  88.25{\footnotesize $\pm$0.72}  &  80.13{\footnotesize $\pm$0.89}  &  82.07{\footnotesize $\pm$1.50}  \\ 
MoB-Adaptive (Ours) &  \textbf{89.55{\footnotesize $\pm$0.68}} &  \textbf{81.23{\footnotesize $\pm$0.87}} &  \textbf{84.04{\footnotesize $\pm$1.43}} \\ 
MoB-Poly (Ours) &  \textbf{89.85{\footnotesize $\pm$0.68}} &  \textbf{81.23{\footnotesize $\pm$0.87}} &  \textbf{83.89{\footnotesize $\pm$1.43}} \\ 
\midrule
$\uparrow$MoB over BoN &  \underline{4.40{\footnotesize $\pm$0.61}} &  \underline{0.25{\footnotesize $\pm$0.46}} &  \underline{1.67{\footnotesize $\pm$0.82}} \\ 
\end{tabular}
\end{adjustbox}
\label{tab:acc-table-skywork-gsm8k}
\end{table}

\begin{table}[h]
\centering
\caption{Results on CSQA with Skywork reward model across all base models ($N = 128$).}
\renewcommand{\arraystretch}{1.3}
\begin{adjustbox}{max width=\textwidth}
\begin{tabular}{l!{\vrule width 1pt} ccc}
\toprule
& \multicolumn{3}{c}{\textbf{Skywork}}\\
& Llama3.1-8B & Gemma2-9B & Qwen2.5-3B \\ 
\midrule
BoN &  77.00{\footnotesize $\pm$0.67}  &  80.15{\footnotesize $\pm$0.63}  &  \textbf{78.85{\footnotesize $\pm$0.65}} \\ 
SC &  75.75{\footnotesize $\pm$0.96}  &  79.25{\footnotesize $\pm$0.91}  &  76.20{\footnotesize $\pm$0.95}  \\ 
WBoN &  76.45{\footnotesize $\pm$0.95}  &  \textbf{80.20{\footnotesize $\pm$0.89}} &  76.15{\footnotesize $\pm$0.95}  \\ 
MoB-Adaptive (Ours) &  \textbf{77.80{\footnotesize $\pm$0.93}} &  \textbf{81.00{\footnotesize $\pm$0.88}} &  77.00{\footnotesize $\pm$0.94}  \\ 
MoB-Poly (Ours) &  \textbf{78.00{\footnotesize $\pm$0.93}} &  \textbf{81.00{\footnotesize $\pm$0.88}} &  77.25{\footnotesize $\pm$0.94}  \\ 
\midrule
$\uparrow$MoB over BoN &  \underline{0.80{\footnotesize $\pm$0.43}} &  \underline{0.85{\footnotesize $\pm$0.41}} &  \underline{-1.85{\footnotesize $\pm$0.49}} \\ 
\end{tabular}
\end{adjustbox}
\label{tab:acc-table-skywork-csqa}
\end{table}

\newpage
\clearpage
\section*{NeurIPS Paper Checklist}

\begin{enumerate}

\item {\bf Claims}
    \item[] Question: Do the main claims made in the abstract and introduction accurately reflect the paper's contributions and scope?
    \item[] Answer: \answerYes{} %
    \item[] Justification: The results supporting the claims are reported in the Method and Experiments section.
    \item[] Guidelines:
    \begin{itemize}
        \item The answer NA means that the abstract and introduction do not include the claims made in the paper.
        \item The abstract and/or introduction should clearly state the claims made, including the contributions made in the paper and important assumptions and limitations. A No or NA answer to this question will not be perceived well by the reviewers. 
        \item The claims made should match theoretical and experimental results, and reflect how much the results can be expected to generalize to other settings. 
        \item It is fine to include aspirational goals as motivation as long as it is clear that these goals are not attained by the paper. 
    \end{itemize}

\item {\bf Limitations}
    \item[] Question: Does the paper discuss the limitations of the work performed by the authors?
    \item[] Answer: \answerYes{} %
    \item[] Justification: Mentioned the limitations in the Conclusions and Future Work section.
    \item[] Guidelines:
    \begin{itemize}
        \item The answer NA means that the paper has no limitation while the answer No means that the paper has limitations, but those are not discussed in the paper. 
        \item The authors are encouraged to create a separate "Limitations" section in their paper.
        \item The paper should point out any strong assumptions and how robust the results are to violations of these assumptions (e.g., independence assumptions, noiseless settings, model well-specification, asymptotic approximations only holding locally). The authors should reflect on how these assumptions might be violated in practice and what the implications would be.
        \item The authors should reflect on the scope of the claims made, e.g., if the approach was only tested on a few datasets or with a few runs. In general, empirical results often depend on implicit assumptions, which should be articulated.
        \item The authors should reflect on the factors that influence the performance of the approach. For example, a facial recognition algorithm may perform poorly when image resolution is low or images are taken in low lighting. Or a speech-to-text system might not be used reliably to provide closed captions for online lectures because it fails to handle technical jargon.
        \item The authors should discuss the computational efficiency of the proposed algorithms and how they scale with dataset size.
        \item If applicable, the authors should discuss possible limitations of their approach to address problems of privacy and fairness.
        \item While the authors might fear that complete honesty about limitations might be used by reviewers as grounds for rejection, a worse outcome might be that reviewers discover limitations that aren't acknowledged in the paper. The authors should use their best judgment and recognize that individual actions in favor of transparency play an important role in developing norms that preserve the integrity of the community. Reviewers will be specifically instructed to not penalize honesty concerning limitations.
    \end{itemize}

\item {\bf Theory assumptions and proofs}
    \item[] Question: For each theoretical result, does the paper provide the full set of assumptions and a complete (and correct) proof?
    \item[] Answer: \answerYes{}{} %
    \item[] Justification: To the best of our knowledge, all theorems are correct, and every mathematical citation is properly used.
    \item[] Guidelines:
    \begin{itemize}
        \item The answer NA means that the paper does not include theoretical results. 
        \item All the theorems, formulas, and proofs in the paper should be numbered and cross-referenced.
        \item All assumptions should be clearly stated or referenced in the statement of any theorems.
        \item The proofs can either appear in the main paper or the supplemental material, but if they appear in the supplemental material, the authors are encouraged to provide a short proof sketch to provide intuition. 
        \item Inversely, any informal proof provided in the core of the paper should be complemented by formal proofs provided in appendix or supplemental material.
        \item Theorems and Lemmas that the proof relies upon should be properly referenced. 
    \end{itemize}

    \item {\bf Experimental result reproducibility}
    \item[] Question: Does the paper fully disclose all the information needed to reproduce the main experimental results of the paper to the extent that it affects the main claims and/or conclusions of the paper (regardless of whether the code and data are provided or not)?
    \item[] Answer: \answerYes{} %
    \item[] Justification: All the implementation details and the choice of model/dataset/benchmark is provided in the main text and the appendix.
    \item[] Guidelines:
    \begin{itemize}
        \item The answer NA means that the paper does not include experiments.
        \item If the paper includes experiments, a No answer to this question will not be perceived well by the reviewers: Making the paper reproducible is important, regardless of whether the code and data are provided or not.
        \item If the contribution is a dataset and/or model, the authors should describe the steps taken to make their results reproducible or verifiable. 
        \item Depending on the contribution, reproducibility can be accomplished in various ways. For example, if the contribution is a novel architecture, describing the architecture fully might suffice, or if the contribution is a specific model and empirical evaluation, it may be necessary to either make it possible for others to replicate the model with the same dataset, or provide access to the model. In general. releasing code and data is often one good way to accomplish this, but reproducibility can also be provided via detailed instructions for how to replicate the results, access to a hosted model (e.g., in the case of a large language model), releasing of a model checkpoint, or other means that are appropriate to the research performed.
        \item While NeurIPS does not require releasing code, the conference does require all submissions to provide some reasonable avenue for reproducibility, which may depend on the nature of the contribution. For example
        \begin{enumerate}
            \item If the contribution is primarily a new algorithm, the paper should make it clear how to reproduce that algorithm.
            \item If the contribution is primarily a new model architecture, the paper should describe the architecture clearly and fully.
            \item If the contribution is a new model (e.g., a large language model), then there should either be a way to access this model for reproducing the results or a way to reproduce the model (e.g., with an open-source dataset or instructions for how to construct the dataset).
            \item We recognize that reproducibility may be tricky in some cases, in which case authors are welcome to describe the particular way they provide for reproducibility. In the case of closed-source models, it may be that access to the model is limited in some way (e.g., to registered users), but it should be possible for other researchers to have some path to reproducing or verifying the results.
        \end{enumerate}
    \end{itemize}

\item {\bf Open access to data and code}
    \item[] Question: Does the paper provide open access to the data and code, with sufficient instructions to faithfully reproduce the main experimental results, as described in supplemental material?
    \item[] Answer: \answerYes{} %
    \item[] Justification: All of the data that we have used is already public. We will publish the code upon acceptance.
    \item[] Guidelines:
    \begin{itemize}
        \item The answer NA means that paper does not include experiments requiring code.
        \item Please see the NeurIPS code and data submission guidelines (\url{https://nips.cc/public/guides/CodeSubmissionPolicy}) for more details.
        \item While we encourage the release of code and data, we understand that this might not be possible, so “No” is an acceptable answer. Papers cannot be rejected simply for not including code, unless this is central to the contribution (e.g., for a new open-source benchmark).
        \item The instructions should contain the exact command and environment needed to run to reproduce the results. See the NeurIPS code and data submission guidelines (\url{https://nips.cc/public/guides/CodeSubmissionPolicy}) for more details.
        \item The authors should provide instructions on data access and preparation, including how to access the raw data, preprocessed data, intermediate data, and generated data, etc.
        \item The authors should provide scripts to reproduce all experimental results for the new proposed method and baselines. If only a subset of experiments are reproducible, they should state which ones are omitted from the script and why.
        \item At submission time, to preserve anonymity, the authors should release anonymized versions (if applicable).
        \item Providing as much information as possible in supplemental material (appended to the paper) is recommended, but including URLs to data and code is permitted.
    \end{itemize}

\item {\bf Experimental setting/details}
    \item[] Question: Does the paper specify all the training and test details (e.g., data splits, hyperparameters, how they were chosen, type of optimizer, etc.) necessary to understand the results?
    \item[] Answer: \answerYes{} %
    \item[] Justification: Our model has only one hyperparameter which is automatically tuned by our method. The generative models and datasets are all well-known and widely used in the community.
    \item[] Guidelines:
    \begin{itemize}
        \item The answer NA means that the paper does not include experiments.
        \item The experimental setting should be presented in the core of the paper to a level of detail that is necessary to appreciate the results and make sense of them.
        \item The full details can be provided either with the code, in appendix, or as supplemental material.
    \end{itemize}

\item {\bf Experiment statistical significance}
    \item[] Question: Does the paper report error bars suitably and correctly defined or other appropriate information about the statistical significance of the experiments?
    \item[] Answer: \answerYes{} %
    \item[] Justification: We provided confidence intervals in our tables and figures.
    \item[] Guidelines:
    \begin{itemize}
        \item The answer NA means that the paper does not include experiments.
        \item The authors should answer "Yes" if the results are accompanied by error bars, confidence intervals, or statistical significance tests, at least for the experiments that support the main claims of the paper.
        \item The factors of variability that the error bars are capturing should be clearly stated (for example, train/test split, initialization, random drawing of some parameter, or overall run with given experimental conditions).
        \item The method for calculating the error bars should be explained (closed form formula, call to a library function, bootstrap, etc.)
        \item The assumptions made should be given (e.g., Normally distributed errors).
        \item It should be clear whether the error bar is the standard deviation or the standard error of the mean.
        \item It is OK to report 1-sigma error bars, but one should state it. The authors should preferably report a 2-sigma error bar than state that they have a 96\% CI, if the hypothesis of Normality of errors is not verified.
        \item For asymmetric distributions, the authors should be careful not to show in tables or figures symmetric error bars that would yield results that are out of range (e.g. negative error rates).
        \item If error bars are reported in tables or plots, The authors should explain in the text how they were calculated and reference the corresponding figures or tables in the text.
    \end{itemize}

\item {\bf Experiments compute resources}
    \item[] Question: For each experiment, does the paper provide sufficient information on the computer resources (type of compute workers, memory, time of execution) needed to reproduce the experiments?
    \item[] Answer: \answerYes{}{} %
    \item[] Justification: Yes, we will provide as many quantifiable and trackable information as possible to address these questions.
    \item[] Guidelines:
    \begin{itemize}
        \item The answer NA means that the paper does not include experiments.
        \item The paper should indicate the type of compute workers CPU or GPU, internal cluster, or cloud provider, including relevant memory and storage.
        \item The paper should provide the amount of compute required for each of the individual experimental runs as well as estimate the total compute. 
        \item The paper should disclose whether the full research project required more compute than the experiments reported in the paper (e.g., preliminary or failed experiments that didn't make it into the paper). 
    \end{itemize}
    
\item {\bf Code of ethics}
    \item[] Question: Does the research conducted in the paper conform, in every respect, with the NeurIPS Code of Ethics \url{https://neurips.cc/public/EthicsGuidelines}?
    \item[] Answer: \answerYes{} %
    \item[] Justification: Yes we have followed the guidelines mentioned on the website during the course of the project.
    \item[] Guidelines:
    \begin{itemize}
        \item The answer NA means that the authors have not reviewed the NeurIPS Code of Ethics.
        \item If the authors answer No, they should explain the special circumstances that require a deviation from the Code of Ethics.
        \item The authors should make sure to preserve anonymity (e.g., if there is a special consideration due to laws or regulations in their jurisdiction).
    \end{itemize}

\item {\bf Broader impacts}
    \item[] Question: Does the paper discuss both potential positive societal impacts and negative societal impacts of the work performed?
    \item[] Answer: \answerNA{} %
    \item[] Justification: Our method is a general technique for improving inference in generative models. The societal impacts depend heavily on the specific use cases and deployment contexts of these models. Therefore, a detailed discussion of societal impacts falls outside the scope of this work.
    \item[] Guidelines:
    \begin{itemize}
        \item The answer NA means that there is no societal impact of the work performed.
        \item If the authors answer NA or No, they should explain why their work has no societal impact or why the paper does not address societal impact.
        \item Examples of negative societal impacts include potential malicious or unintended uses (e.g., disinformation, generating fake profiles, surveillance), fairness considerations (e.g., deployment of technologies that could make decisions that unfairly impact specific groups), privacy considerations, and security considerations.
        \item The conference expects that many papers will be foundational research and not tied to particular applications, let alone deployments. However, if there is a direct path to any negative applications, the authors should point it out. For example, it is legitimate to point out that an improvement in the quality of generative models could be used to generate deepfakes for disinformation. On the other hand, it is not needed to point out that a generic algorithm for optimizing neural networks could enable people to train models that generate Deepfakes faster.
        \item The authors should consider possible harms that could arise when the technology is being used as intended and functioning correctly, harms that could arise when the technology is being used as intended but gives incorrect results, and harms following from (intentional or unintentional) misuse of the technology.
        \item If there are negative societal impacts, the authors could also discuss possible mitigation strategies (e.g., gated release of models, providing defenses in addition to attacks, mechanisms for monitoring misuse, mechanisms to monitor how a system learns from feedback over time, improving the efficiency and accessibility of ML).
    \end{itemize}
    
\item {\bf Safeguards}
    \item[] Question: Does the paper describe safeguards that have been put in place for responsible release of data or models that have a high risk for misuse (e.g., pretrained language models, image generators, or scraped datasets)?
    \item[] Answer: \answerYes{} %
    \item[] Justification: We are not releasing any new dataset or scraped datasets. Every model and dataset used in this paper is already open-source and public.
    \item[] Guidelines:
    \begin{itemize}
        \item The answer NA means that the paper poses no such risks.
        \item Released models that have a high risk for misuse or dual-use should be released with necessary safeguards to allow for controlled use of the model, for example by requiring that users adhere to usage guidelines or restrictions to access the model or implementing safety filters. 
        \item Datasets that have been scraped from the Internet could pose safety risks. The authors should describe how they avoided releasing unsafe images.
        \item We recognize that providing effective safeguards is challenging, and many papers do not require this, but we encourage authors to take this into account and make a best faith effort.
    \end{itemize}

\item {\bf Licenses for existing assets}
    \item[] Question: Are the creators or original owners of assets (e.g., code, data, models), used in the paper, properly credited and are the license and terms of use explicitly mentioned and properly respected?
    \item[] Answer: \answerNo{} %
    \item[] Justification: We have provided proper citation for every model/dataset that we have used. The reader can find the licensing of those assets in those references, but we have not mentioned those licenses in our manuscript. 
    \item[] Guidelines:
    \begin{itemize}
        \item The answer NA means that the paper does not use existing assets.
        \item The authors should cite the original paper that produced the code package or dataset.
        \item The authors should state which version of the asset is used and, if possible, include a URL.
        \item The name of the license (e.g., CC-BY 4.0) should be included for each asset.
        \item For scraped data from a particular source (e.g., website), the copyright and terms of service of that source should be provided.
        \item If assets are released, the license, copyright information, and terms of use in the package should be provided. For popular datasets, \url{paperswithcode.com/datasets} has curated licenses for some datasets. Their licensing guide can help determine the license of a dataset.
        \item For existing datasets that are re-packaged, both the original license and the license of the derived asset (if it has changed) should be provided.
        \item If this information is not available online, the authors are encouraged to reach out to the asset's creators.
    \end{itemize}

\item {\bf New assets}
    \item[] Question: Are new assets introduced in the paper well documented and is the documentation provided alongside the assets?
    \item[] Answer: \answerYes{} %
    \item[] Justification: The code of our method will be public. We are not releasing any new asset (e.g. model/dataset) except our method's code.
    \item[] Guidelines:
    \begin{itemize}
        \item The answer NA means that the paper does not release new assets.
        \item Researchers should communicate the details of the dataset/code/model as part of their submissions via structured templates. This includes details about training, license, limitations, etc. 
        \item The paper should discuss whether and how consent was obtained from people whose asset is used.
        \item At submission time, remember to anonymize your assets (if applicable). You can either create an anonymized URL or include an anonymized zip file.
    \end{itemize}

\item {\bf Crowdsourcing and research with human subjects}
    \item[] Question: For crowdsourcing experiments and research with human subjects, does the paper include the full text of instructions given to participants and screenshots, if applicable, as well as details about compensation (if any)? 
    \item[] Answer: \answerNA{}{} %
    \item[] Justification: Our paper does not involve crowdsourcing nor research with human subjects.
    \item[] Guidelines:
    \begin{itemize}
        \item The answer NA means that the paper does not involve crowdsourcing nor research with human subjects.
        \item Including this information in the supplemental material is fine, but if the main contribution of the paper involves human subjects, then as much detail as possible should be included in the main paper. 
        \item According to the NeurIPS Code of Ethics, workers involved in data collection, curation, or other labor should be paid at least the minimum wage in the country of the data collector. 
    \end{itemize}

\item {\bf Institutional review board (IRB) approvals or equivalent for research with human subjects}
    \item[] Question: Does the paper describe potential risks incurred by study participants, whether such risks were disclosed to the subjects, and whether Institutional Review Board (IRB) approvals (or an equivalent approval/review based on the requirements of your country or institution) were obtained?
    \item[] Answer: \answerNA{}{} %
    \item[] Justification: Our paper does not involve crowdsourcing nor research with human subjects.
    \item[] Guidelines:
    \begin{itemize}
        \item The answer NA means that the paper does not involve crowdsourcing nor research with human subjects.
        \item Depending on the country in which research is conducted, IRB approval (or equivalent) may be required for any human subjects research. If you obtained IRB approval, you should clearly state this in the paper. 
        \item We recognize that the procedures for this may vary significantly between institutions and locations, and we expect authors to adhere to the NeurIPS Code of Ethics and the guidelines for their institution. 
        \item For initial submissions, do not include any information that would break anonymity (if applicable), such as the institution conducting the review.
    \end{itemize}

\item {\bf Declaration of LLM usage}
    \item[] Question: Does the paper describe the usage of LLMs if it is an important, original, or non-standard component of the core methods in this research? Note that if the LLM is used only for writing, editing, or formatting purposes and does not impact the core methodology, scientific rigorousness, or originality of the research, declaration is not required.
    \item[] Answer: \answerNA{}{} %
    \item[] Justification: The core method development in our research does not
    involve LLMs as any important, original, or non-standard components.
    \item[] Guidelines:
    \begin{itemize}
        \item The answer NA means that the core method development in this research does not involve LLMs as any important, original, or non-standard components.
        \item Please refer to our LLM policy (\url{https://neurips.cc/Conferences/2025/LLM}) for what should or should not be described.
    \end{itemize}

\end{enumerate}

\end{document}